\documentclass[lettersize,journal]{IEEEtran}
\usepackage{amsmath,amsfonts}
\usepackage{algorithmic}
\usepackage{algorithm}
\usepackage{array}
\usepackage[caption=false,font=normalsize,labelfont=sf,textfont=sf]{subfig}
\usepackage{textcomp}
\usepackage{stfloats}
\usepackage{url}
\usepackage{verbatim}
\usepackage{graphicx}
\usepackage{cite}
\hyphenation{op-tical net-works semi-conduc-tor IEEE-Xplore}
\usepackage[caption=false,font=normalsize,labelfont=sf,textfont=sf]{subfig} 
\usepackage{textcomp}
\usepackage{stfloats}
\usepackage{url}
\usepackage{verbatim}
\usepackage{graphicx}
\usepackage{color,soul}
\usepackage{hyperref}
\usepackage{booktabs}
\usepackage{amsthm}
\usepackage[numbers]{natbib}
\usepackage{tabularx}
\usepackage{amssymb}
\usepackage[algo2e]{algorithm2e}
\usepackage{comment}
\usepackage{multirow}
\usepackage{dsfont}
\usepackage{multicol}
\captionsetup[subfloat]{font=footnotesize,labelfont=normal}
\newtheorem{theorem}{Theorem}

\newtheorem*{restate}{Theorem}
\hyphenation{op-tical net-works semi-conduc-tor IEEE-Xplore}

\begin{document}

\title{Leveraging Programmatically Generated Synthetic Data for Differentially Private Diffusion Training}


\author{
    \IEEEauthorblockN{Yujin Choi\textsuperscript{1}, Jinseong Park\textsuperscript{1}, Junyoung Byun\textsuperscript{2}, Jaewook Lee\textsuperscript{1*}}\\
    \IEEEauthorblockA{\textsuperscript{1}Department of Industrial Engineering, Seoul National University, Seoul, Korea\\
    \textsuperscript{2}Department of Applied Statistics, Chung-Ang University, Seoul, Korea
    }
    \thanks{\{uznhigh, jinseong, jaewook\}@snu.ac.kr, junyoungb@cau.ac.kr}
     \thanks{Jaewook Lee is the corresponding author.}}

{}

\IEEEpubid{}

\maketitle
%
\begin{abstract}
Programmatically generated synthetic data has been used in differential private training for classification to enhance performance without privacy leakage. However, as the synthetic data is generated from a random process, the distribution of real data and the synthetic data are distinguishable and difficult to transfer. Therefore, the model trained with the synthetic data generates unrealistic random images, raising challenges to adapt the synthetic data for generative models.
In this work, we propose \textbf{DP-SynGen}, which leverages programmatically generated synthetic data in diffusion models to address this challenge. 
By exploiting the three stages of diffusion models—coarse, context, and cleaning—we identify stages where synthetic data can be effectively utilized. 
We theoretically and empirically verified that cleaning and coarse stages can be trained without private data, replacing them with synthetic data to reduce the privacy budget. The experimental results show that DP-SynGen improves the quality of generative data by mitigating the negative impact of privacy-induced noise on the generation process. 
\end{abstract}

\begin{IEEEkeywords}
Differential privacy, Diffusion models, Image Synthesis,  Programmatically generated synthetic data
\end{IEEEkeywords}

\section{Introduction}
\IEEEPARstart{A}{dvancements} in deep learning models have led to a rise in critical privacy concerns. Differential privacy (DP) \cite{dwork2006differential} can be applied to mitigate this risk, but it often results in model underfitting. To address this challenge, recent studies have focused on pre-train the model before private training. 
Many of these works have utilized public datasets that can be accessed without concerning privacy. They assume a small ratio of in-distribution public data as public data \cite{park2024distribution} or large-scale out-of-distribution (OOD) public data \cite{de2022unlocking}.
However, these public data may contain sensitive information  \cite{tramer2022position}. Therefore, we should carefully consider its privacy, increasing the importance of training models without relying on public data and fully preserving privacy.

Recently, with advancements in generative models that can generate realistic data, the importance of privacy has grown. 
In particular, several studies have focused on privacy for diffusion models, which have become the de facto standard in recent generative modeling. The diffusion model's small, iterative denoising steps, which are simpler and smoother than single-step generative models, make them converge efficiently and facilitate DP training \cite{dockhorn2022differentially}. 
However, there are two major drawbacks of DP diffusion training: unstable training and large training steps. To address unstable training, \citet{dockhorn2022differentially} proposed the DP diffusion model (DPDM) with noise multiplicity, reusing the same training samples with multiple perturbation levels.

The large training steps arise from the iterative nature of diffusion models. As noted in \citet{de2022unlocking}, DP training has an optimal budget for training iteration. However, since the diffusion model should learn the entire denoising sequence, it requires many training iterations, which increases DP noise per step and degrades performance. Moreover, the sequential denoising process causes noise to accumulate across steps, which negatively impacts generation performance. This performance degradation raises questions about the necessity of private training at every step in diffusion models. As well-optimized pre-trained models can reduce training epochs and promote stable training, we aim to explore pre-training methods that do not raise privacy concerns.

\citet{baradad2021learning} verified that pre-training with programmatically generated synthetic data, which is created from random processes such as automatically-generated fractals, can help to find the better initial weight of models and improve classifier training. As synthetic data are free from privacy leakage, \citet{tang2024differentially} leverages the synthetic data in DP classification, without privacy concerns.
They train a feature extractor with the synthetic data and get state-of-the-art accuracy in DP classification. Moreover, \citet{yu2023vip} leverages programmatically generated synthetic data to DP foundation models.


Compared to pre-training classifiers that can make use of self-supervised learning or feature extractors, finding a better initial model without assuming public data or models is difficult for generative models. 
While public data, even if OOD data, has similar characteristics to private data, its pre-trained features and weights can act as a good initial point for public data. However, programmatically generated synthetic data, derived from random processes, leads to poor initialization during pre-training. 
As diffusion models learn the denoising process, significant discrepancies in the denoising path make pretraining with the synthetic data hard.


To address this, we explore the properties of the diffusion process to verify whether the synthetic data can be effectively used during training. \citet{choi2022perception} identified that the diffusion process involved three stages: coarse, context, and cleaning. Building on this finding, we investigate which stage the synthetic data can be used to reduce the training steps and find a better initial model. 
Following insights from previous research \cite{choi2022perception, choi2024fair}, we found that the context stage of the diffusion process is crucial, even under DP training. Furthermore, we found that private data is unnecessary during the coarse and cleaning stages, allowing us to substitute synthetic data in these phases.

With these findings, we propose \textit{DP-SynGen}, which utilizes the synthetic data to find a better initial model and reduce the number of training steps. 
DP-SynGen reduces the number of training iterations requiring access to private data by leveraging programmatically generated synthetic data in the coarse and cleaning stages, thereby enhancing privacy preservation while maintaining model performance.

Our contributions can be summarized as follows:
\begin{itemize}
\item We alter the private data with programmatically generated synthetic data and reduce the number of training epochs for DP training.
\item We theoretically and empirically identify diffusion stages that synthetic data can help the training and finding a better initial point.
\end{itemize}

\section{Preliminary}
\subsection{Diffusion model}
Diffusion models solve a stochastic differential equation (SDE) to generate data as follows:
\begin{align}\label{eq:diffusion}
    dX_t = f(X_t,t)dt + g(t)dW_t,
\end{align}
where $f(\cdot)$ represents the drift function and $g(\cdot)$ denotes the diffusion coefficient with a Wiener process $dW_t$.
Denoising Diffusion Probabilistic Model (DDPM) \cite{ho2020denoising} set $f(X_t,t) = -\frac{1}{2}\beta_tX_t$ and $g(t) = \sqrt{\beta_t}$, where $\beta_t$ is scheduled hyper-parameter with uniform time stamps $t$. The discrete-time forward stochastic process became 
\begin{align}\label{eq:diffusion_discrete}
X_t = \sqrt{\bar{\alpha}_t}X_0 + \sqrt{1-\bar{\alpha}_t}\eta,
\end{align}
where $\bar{\alpha}_t = \Pi_{s = 1}^t (1-\beta_s)$ and $\eta \sim \mathcal{N}(0, I)$.
As the diffusion model generates data samples from noise, \citet{kingma2021variational} used Signal-to-Noise-Ratio (SNR), defined as $\frac{\bar{\alpha}_t}{1-\bar{\alpha}_t}$, for diffusion sampling. 
Furthermore, 
Elucidating Diffusion Models (EDMs) \cite{karras2022elucidating} redesign diffusion forward process as follows:
\begin{equation}\label{eq:edm}
    X_\sigma = \frac{X_0}{\sqrt{1+\sigma^2}} + \frac{\sigma}{\sqrt{1+\sigma^2}} \eta,
\end{equation}
where $\sigma$ is the noise level,  $\text{SNR}=\frac{1}{\sigma^2}$, and $\bar{\alpha}_\sigma = \frac{1}{1+\sigma^2}$.
EDM uses the $\sigma$-scheduling and thus enables different weights on loss function to training steps $t$.


Building on this, \citet{choi2022perception} categorized the sampling process into three steps: extracting coarse features, producing rich content, and eliminating residual noise. 
High-level features are formed during the coarse stage, while fine details are addressed in the content and clean stages. \citet{choi2024fair} used this property in fair image generation by differing sensitive conditions between the coarse and fine stages.
In terms of efficiency,  \citet{hang2023efficient} proposed a new loss weighting strategy, focusing on high SNR steps.

\subsection{Differential Privacy}
Differential privacy (DP) \cite{dwork2006differential} is formulated as follows: a randomized mechanism $M$ satisfies \((\epsilon, \delta)\)-DP if, for any subset of possible outputs \( S \subseteq \text{Range}(M) \) and for two neighboring datasets $D$ and $D'$, the following condition holds:
\begin{equation}
    \Pr[M(D) \in S] \leq e^{\epsilon} \Pr[M(D') \in S] + \delta.
\end{equation}
In deep learning, DP-SGD \cite{abadi2016deep} is the most conventional method by using gradient clipping and noise addition. To mitigate underfitting in the diffusion training, DPDM \cite{dockhorn2022differentially} introduces noise multiplicity, averaging gradient directions over $k$ different noise levels to reduce the effect of noise in individual gradients.





\section{Proposed Method}

\subsection{Diffusion process involved three stages}\label{sec:motivation}
In this section, we investigate the training stage where synthetic data can be utilized by some toy examples and theoretical analysis. For toy examples, we use the total diffusion step $T = 1000$, and the programmatically generated synthetic data as salt-and-pepper noise, which is well-known for black and white images. 
All the details of toy examples and the detailed proofs for Theorems are in the Appendix.
For simplicity, we refer to programmatically generated synthetic data as synthetic data throughout the remainder of this paper.


\textbf{Importance of context stages.}
We first trained two diffusion models: one on the synthetic images and the other on the MNIST. We then divided the diffusion process into two equal-length parts: one is the context stage for $t\in (250, 750]$ and the other is the stages for $t\in (0, 250]$ and $t\in(750, 1000]$. For each part, a different diffusion model was used - specifically, if the MNIST-trained model was used for the context stage, then the synthetic image-trained model was used for the other stage, and vice versa. The results are shown in the Figure \ref{fig:toy_context}. Even with the same number of diffusion steps, the characteristics of the generated images are different depending on the model used in the context part.
This infers that the context stage is important in sampling images, and cannot be replaced with models trained on different data.

\begin{figure}[ht]
\centering   
    \subfloat[Context stage: Synthetic data, Other stage: MNIST data]{\includegraphics[width=0.233\textwidth]{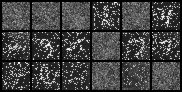} \label{fig:toy_private}}
    \subfloat[Context stage: MNIST data, Other stage: Synthetic data]{\includegraphics[width=0.233\textwidth]{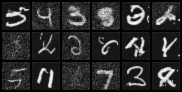} \label{fig:toy_synthetic}}
    \caption{Samples context stage $t \in (250, 750]$ with (a) synthetic data- and (b) private data-trained model. For other stages, the private data- and synthetic data-trained model are used, respectively.}
    \label{fig:toy_context}
\end{figure}

\textbf{Synthetic data can replace coarse stages.}
Next, to verify the usability of synthetic data in the coarse stage, we trained one model using synthetic data and the other using MNIST data for the diffusion step $t\in (\tau, T]$, where $\tau$ serves as the threshold for the coarse stage. Then, both models were trained using MNIST data for the diffusion step $t\in (0, \tau]$. The generated images from each diffusion model ($\tau = 750)$ are illustrated in Figure \ref{fig:toy_coarse}.  



\begin{figure}[ht]
\centering    
    \subfloat[Train with private data]{\includegraphics[width=0.24 \textwidth,  trim=0 88 0 0, clip]{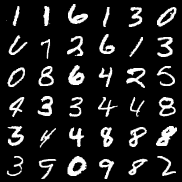} \label{fig:toy_private}}
    \subfloat[Train with synthetic data]{\includegraphics[width=0.24\textwidth,  trim=0 88 0 0, clip]{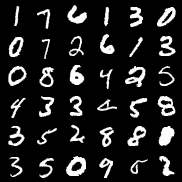} \label{fig:toy_synthetic}}
    \caption{Samples from (a) diffusion model trained with private data for total diffusion process and (b) diffusion model trained with synthetic data for coarse stage ($t > 750$) and private data with other stages ($t \leq 750$).}
    \label{fig:toy_coarse}
\end{figure}

The generated data trained with synthetic data looks similar to the vanilla diffusion model, even though the model is not trained with private data in the coarse stage. We theoretically demonstrate that training the diffusion model with any data can yield similar results as training with private data. In other words, regardless of the initial data distribution, we prove a time step $n$ exists in the diffusion process where the model can be trained within an acceptable margin of error.

\begin{theorem}\label{thm:coarse}
For any two different data distributions $X_0$ and $Y_0$, let $X_t$ and $Y_t$ denote their respective states under the forward diffusion process at time $t$, as defined in Equation \ref{eq:diffusion}.
Then, for any $\nu$ and $\gamma$, we can find $N$ such that for any $n\geq N$, following satisfies:
\begin{align}\label{eq:eps}
    P(\|X_n-Y_n\| >\nu) \leq \gamma.
\end{align}
\end{theorem}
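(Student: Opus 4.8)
The plan is to couple the two forward processes by driving them with the \emph{same} Wiener process $W_t$ (equivalently, in the discrete-time picture of Equation \ref{eq:diffusion_discrete}, by injecting the same Gaussian noise $\eta$), and then to exploit the fact that the DDPM forward dynamics are linear in the state with a state-independent diffusion coefficient. Concretely, realize $X_t$ and $Y_t$ as solutions of $dX_t = -\tfrac12\beta_t X_t\,dt + \sqrt{\beta_t}\,dW_t$ started from $X_0$ and $Y_0$ on a common probability space with a common driver $W_t$. Subtracting the two SDEs, the noise terms cancel and the difference $Z_t := X_t - Y_t$ solves the closed, noise-free linear ODE
\begin{equation}
    dZ_t = -\tfrac12\beta_t Z_t\,dt, \qquad Z_0 = X_0 - Y_0,
\end{equation}
whose solution is $Z_t = (X_0 - Y_0)\exp\!\big(-\tfrac12\int_0^t\beta_s\,ds\big) = \sqrt{\bar\alpha_t}\,(X_0 - Y_0)$, where $\bar\alpha_t = \exp(-\int_0^t\beta_s\,ds)$ is the continuous-time analogue of $\bar\alpha_t = \prod_{s=1}^{t}(1-\beta_s)$.

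Given this identity the conclusion is immediate from two standard facts about the noise schedule: $\bar\alpha_t$ is non-increasing, and $\bar\alpha_t \to 0$ for every standard schedule (i.e. $\int_0^{\infty}\beta_s\,ds = \infty$). Fix $\nu,\gamma>0$. Since $\|X_0 - Y_0\|$ is almost surely finite, choose $R$ with $P(\|X_0 - Y_0\| > R)\le\gamma$; for image data one may simply take $R$ to be the diameter of the bounded pixel domain, in which case $\gamma$ is replaced by $0$. Then pick $N$ with $\sqrt{\bar\alpha_N}\le \nu/R$; by monotonicity $\sqrt{\bar\alpha_n}\le\nu/R$ for all $n\ge N$, so on the event $\{\|X_0-Y_0\|\le R\}$ we have $\|X_n - Y_n\| = \sqrt{\bar\alpha_n}\,\|X_0 - Y_0\| \le \nu$. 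Hence $\{\|X_n - Y_n\|>\nu\}\subseteq\{\|X_0 - Y_0\|>R\}$ and $P(\|X_n - Y_n\|>\nu)\le\gamma$, which is Equation \ref{eq:eps}.

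The one point that deserves care — and the only place the argument is subtle rather than routine — is the status of the coupling, so I would state it explicitly. The inequality cannot be read as a statement about the marginals of $X_n$ and $Y_n$ in isolation: both marginals converge to $\mathcal{N}(0,I)$, but if the two processes were driven by \emph{independent} forward noise their difference would remain $\mathcal{N}(0,2I)$ and never become small. The claim is genuinely about the joint law induced by sharing the forward trajectory, which is exactly the coupling in play when one swaps the training distribution while keeping the same noising schedule. A secondary caveat to record is that the conclusion requires $\bar\alpha_t\to 0$: for a degenerate schedule with $\int_0^{\infty}\beta_s\,ds<\infty$ one only obtains $\|X_n - Y_n\|\ge c\,\|X_0 - Y_0\|$ and the statement fails, but such schedules do not arise in practice. (An alternative route — showing the Ornstein–Uhlenbeck forward process converges to its Gaussian stationary law in total variation and then invoking a maximal coupling — lands in the same place but is strictly more work than the linear-ODE computation above, so I would not pursue it.)
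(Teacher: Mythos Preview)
Your argument is correct and, in fact, cleaner than the paper's own proof, but the route is genuinely different. The paper does not compute the difference $X_n-Y_n$ directly; instead it asserts that each of $X_n$ and $Y_n$ converges in probability to a common standard normal variable $\mathcal{N}$ and then runs a triangle-inequality argument, bounding $P(\|X_n-Y_n\|>\nu)$ by $P(\|X_n-\mathcal{N}\|>\nu/2)+P(\|Y_n-\mathcal{N}\|>\nu/2)$. That argument tacitly relies on exactly the shared-noise coupling you make explicit (otherwise ``converging in probability to the \emph{same} $\mathcal{N}$'' has no meaning), but the paper never says so. Your approach buys two things: it exposes the coupling as the crux of the statement and flags that the theorem is about a joint law rather than marginals, and it bypasses the intermediate limit entirely by reading off $X_n-Y_n=\sqrt{\bar\alpha_n}(X_0-Y_0)$ from the linearity of the drift, needing only $\bar\alpha_n\to 0$ and a tail bound on $\|X_0-Y_0\|$. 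The paper's route is slightly more abstract (it would survive if the forward process were nonlinear but still contracting to a unique stationary law), at the cost of being vaguer about what ``$X_n\to\mathcal{N}$ in probability'' actually means.
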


\begin{proof}[sketch of proof]
Use the property that the diffusion process converges probabilistically to normal distribution. 
\end{proof}
Theorem \ref{thm:coarse} demonstrates that for any two data distributions, there exists a point in the diffusion noising process where the distributions become similar in probabilistic. Even if the diffusion model learns $X_t$ for $t>N$ perfectly, the difference between original training and synthetic training can be reduced to within $\nu$, with probability $1-\gamma$. Since this error is negligible in training, replacing the private data with synthetic data in the coarse stage results in an equivalent trained model.
Therefore, by replacing the coarse stage with synthetic data, the performance of the vanilla model can be maintained without any privacy concerns.

\textbf{Synthetic data can replace cleaning stages. }
Finally, for the cleaning stage, we trained one model with synthetic data and the other using MNIST data for the diffusion step $t\in [0, \tau]$, where $\tau$ serves as the threshold for the cleaning stage. Then, we forward the test data into $\tau$, and denoising the data using each trained model.
The generated images from each diffusion model ($\tau = 250)$ are illustrated in Figure \ref{fig:toy_cleaning}.
\begin{figure}[]
\centering    
    \subfloat[Train with private data]{\includegraphics[width=0.24\textwidth,  trim=0 88 0 0, clip]{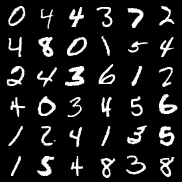} \label{fig:orig_toy_clean}}
    \subfloat[Train with synthetic data]{\includegraphics[width=0.24\textwidth,  trim=0 88 0 0, clip]{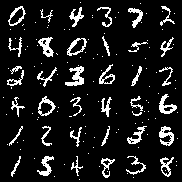} \label{fig:switch_toy_clean}}
    \caption{Samples from (a) diffusion model trained with private data for total diffusion process and (b) diffusion model trained with synthetic data for cleaning stage ($t \leq 250$) and private data with other stages ($t > 250$).}
    \label{fig:toy_cleaning}
\end{figure}

The toy example shows that for cleaning, some characteristics of the synthetic data were reflected. Additional examples with smaller $\tau$ and trained with different types of synthetic data are provided in the Appendix. When $\tau$ was small, the cleaning performance improved significantly, and even the same $\tau$, performance depends on which synthetic data is used.
We provide a Theorem to demonstrate this phenomenon. 

\begin{theorem}\label{thm:cleaning}
    For any two data distributions $X_0$ and $Y_0$ which satisfies $E(\|X_0\|) < \infty$ and $E(\|Y_0\|) < \infty$, and the
    discrete diffusion process is defined as Equation \ref{eq:diffusion_discrete}. 
    For given diffusion process with $\beta_t$-scheduling and for $t \leq \tau$, $\bar{\alpha}_t = \Pi_{s = 1}^t (1-\beta_s)\approx 1$. Then, for any $\nu$, we can find $\gamma$ which depends on the difference between two data distributions and $\bar{\alpha}_t$, such that
    \begin{align*}
        P(\|X_t - X_{0} - (Y_t - Y_{0})\| > \nu) <\gamma.
    \end{align*}
\end{theorem}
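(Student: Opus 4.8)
The plan is to turn the statement into a short computation on the forward process followed by Markov's inequality. First I would make the two noise increments explicit using Equation \ref{eq:diffusion_discrete}: writing $X_t = \sqrt{\bar\alpha_t}\,X_0 + \sqrt{1-\bar\alpha_t}\,\eta_X$ and $Y_t = \sqrt{\bar\alpha_t}\,Y_0 + \sqrt{1-\bar\alpha_t}\,\eta_Y$ for the Gaussian noises $\eta_X,\eta_Y$ driving the two forward trajectories, subtraction gives
\[
X_t - X_0 - (Y_t - Y_0) \;=\; (\sqrt{\bar\alpha_t}-1)(X_0 - Y_0) \;+\; \sqrt{1-\bar\alpha_t}\,(\eta_X - \eta_Y).
\]
Thus the quantity to be controlled decomposes into a \emph{data} term scaled by $1-\sqrt{\bar\alpha_t}$ and a \emph{noise} term scaled by $\sqrt{1-\bar\alpha_t}$, both of which are small on $t\le\tau$ precisely because $\bar\alpha_t\approx 1$ there.

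Second, I would apply the triangle inequality and take expectations. Using $1-\sqrt{\bar\alpha_t} = (1-\bar\alpha_t)/(1+\sqrt{\bar\alpha_t}) \le 1-\bar\alpha_t$, the bound $E\|X_0 - Y_0\| \le E\|X_0\| + E\|Y_0\| < \infty$ coming from the hypothesis, and $E\|\eta_X - \eta_Y\| \le c_d$ for a finite dimension-dependent constant (a difference of Gaussians is Gaussian and hence has finite first moment), I obtain
\[
E\big\|X_t - X_0 - (Y_t - Y_0)\big\| \;\le\; (1-\bar\alpha_t)\,E\|X_0-Y_0\| \;+\; \sqrt{1-\bar\alpha_t}\;c_d \;=:\; m_t.
\]
Markov's inequality then yields $P\big(\|X_t - X_0 - (Y_t - Y_0)\| > \nu\big) \le m_t/\nu$, so one may take $\gamma = m_t/\nu$. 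By construction this $\gamma$ is a function only of $\bar\alpha_t$ and of $E\|X_0 - Y_0\|$ (the ``difference between the two data distributions''), and $\gamma\to 0$ as $\bar\alpha_t\to 1$, which is the quantitative content that makes the cleaning stage replaceable.

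I would also record the cleaner special case in which the two forward processes are coupled through a common driving noise $\eta_X = \eta_Y$: then the noise term vanishes identically and one can take $\gamma = (1-\bar\alpha_t)\,E\|X_0-Y_0\|/\nu$, a bound depending literally on the data difference and $\bar\alpha_t$ alone. The only delicate point, and the thing I would settle up front, is exactly this bookkeeping about the noise: whether $\eta_X,\eta_Y$ are independent or shared changes whether $m_t$ carries the extra $\sqrt{1-\bar\alpha_t}\,c_d$ term. In either reading the conclusion holds, and everything past the decomposition above is a one-line triangle-inequality-plus-Markov argument, so I do not anticipate a genuine obstacle beyond stating the hypotheses precisely.
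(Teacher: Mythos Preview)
Your proposal is correct, and the decomposition $X_t-X_0-(Y_t-Y_0)=(\sqrt{\bar\alpha_t}-1)(X_0-Y_0)+\sqrt{1-\bar\alpha_t}(\eta_X-\eta_Y)$ is exactly what the paper uses. The divergence comes after the triangle inequality. The paper splits the event at $\nu/2$ and bounds the two tails separately: Markov's inequality on $P\big((1-\sqrt{\bar\alpha_t})\|X_0-Y_0\|>\nu/2\big)$ (same as yours), and a Chernoff bound for the $\chi^2$ distribution on $P\big(\sqrt{1-\bar\alpha_t}\,\|Z_1-Z_2\|>\nu/2\big)$, which produces an exponential factor of the form $\exp\!\big(-\nu^2/(16(1-\bar\alpha_t))+\dots\big)$. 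You instead control the first moment of the whole expression and apply Markov once, giving $\gamma=m_t/\nu$. Your argument is shorter and more elementary; the paper's Chernoff step buys an exponentially small noise contribution in $1-\bar\alpha_t$, but since the theorem only asks for \emph{some} $\gamma$ that depends on $\bar\alpha_t$ and the data difference and tends to $0$ as $\bar\alpha_t\to1$, either bound suffices. Your remark on the common-noise coupling is an additional observation that does not appear in the paper.
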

\begin{proof}[sketch of proof]
Using Markov's inequality for data distributions, and Chernoff's inequality for $\chi^2$- distribution. 
\end{proof}
Theorem \ref{thm:cleaning} demonstrates that in the cleaning stage, where $\bar{\alpha}_t\approx 1$, the difference between $X_t$ and $X_{0}$ (correspondingly, $Y_t$ and $Y_{0}$) is small, and the discrepancy between two denoising processes remains small, even when using different training data. 
Whereas Theorem \ref{thm:coarse} uses the property of the general diffusion process, the upper bound of Theorem \ref{thm:cleaning} depends on the scheduling of hyper-parameter $\beta_t$ and a norm difference of $X_0$ and $Y_0$. 
Although leveraging synthetic data in the cleaning stage requires a suitable scheduler, this requirement is generally satisfied in the common diffusion processes. Therefore, the synthetic data can be used in the cleaning stage without privacy concerns, by choosing an appropriate $t$.


The findings from toy examples and Theorems suggest the importance of the context stage and demonstrate that using synthetic data in the coarse and cleaning stage can benefit diffusion training, without privacy concerns. 

\subsection{Leveraging synthetic data in DP diffusion training}
In the previous section, we verified utilizing synthetic data in the coarse and cleaning stages can help diffusion training. 
Based on these findings, we propose \textit{DP-SynGen}, a general framework that leverages synthetic data in DP training. Depending on the stage where the synthetic data is utilized, DP-SynGen can make use of two approaches: \textit{DP-SynGen Coarse} and \textit{DP-SynGen Cleaning}. Additionally, to explore the potential of using synthetic data for pre-training, we introduce \textit{DP-SynGen FineTune}, which pre-trains the coarse stage with the synthetic data and trains the total process with private data, i.e., DP-SynGen Coarse with $\tau_2 = \infty$ (will be stated later).


The training steps of DP-SynGen are composed of three stages, defined by which data is used for training; synthetic data, private data, and both. In the first stage, the model is trained on synthetic data. While this can lead to performance degradation due to the use of out-of-distribution data, the absence of private learning requirements allows for potential performance gains. In the other stages, training is conducted solely on private data or on both synthetic and private data together. Since the first stage does not require private learning, we can reduce the number of training epochs for private learning in the other stages. This reduction in training epochs decreases the noise per iteration, leading to more accurate and stable learning.

We adopted the Elucidated Diffusion Model (EDM) \cite{karras2022elucidating}, as in \cite{dockhorn2022differentially}.
Thus, we utilize the standard deviation of the noise $\sigma$, with scheduling as in Equation \ref{eq:edm}. At each training step of EDM, they sample $\ln(\sigma)$ from a normal distribution. Therefore, the threshold $\tau$ were selected for $\ln(\sigma)$. By truncating the normal distribution, the tail is removed, allowing the EDM model to focus on the context phase.

%



To divide the diffusion process into the stages, we sample $\ln(\sigma)$ from a truncated normal distribution, based on thresholds. Specifically, synthetic data is utilized in the cleaning or coarse stage, while private data is used in the remaining stages.
To leverage the synthetic data for DP training, the model is first pre-trained on synthetic data and subsequently trained with private data. 
This two-phase training allows for overlap between stages that use synthetic data and private data, as the three stages of the diffusion process are not strictly divided but gradually mixed. 
This overlap can lead to better performance. Therefore, 
to truncate the normal distribution with a threshold $\tau_1$ for synthetic data training and $\tau_2$ for private training. Figure \ref{fig:process} illustrates our method.
\begin{figure}
    \centering
    \includegraphics[width=0.98\linewidth]{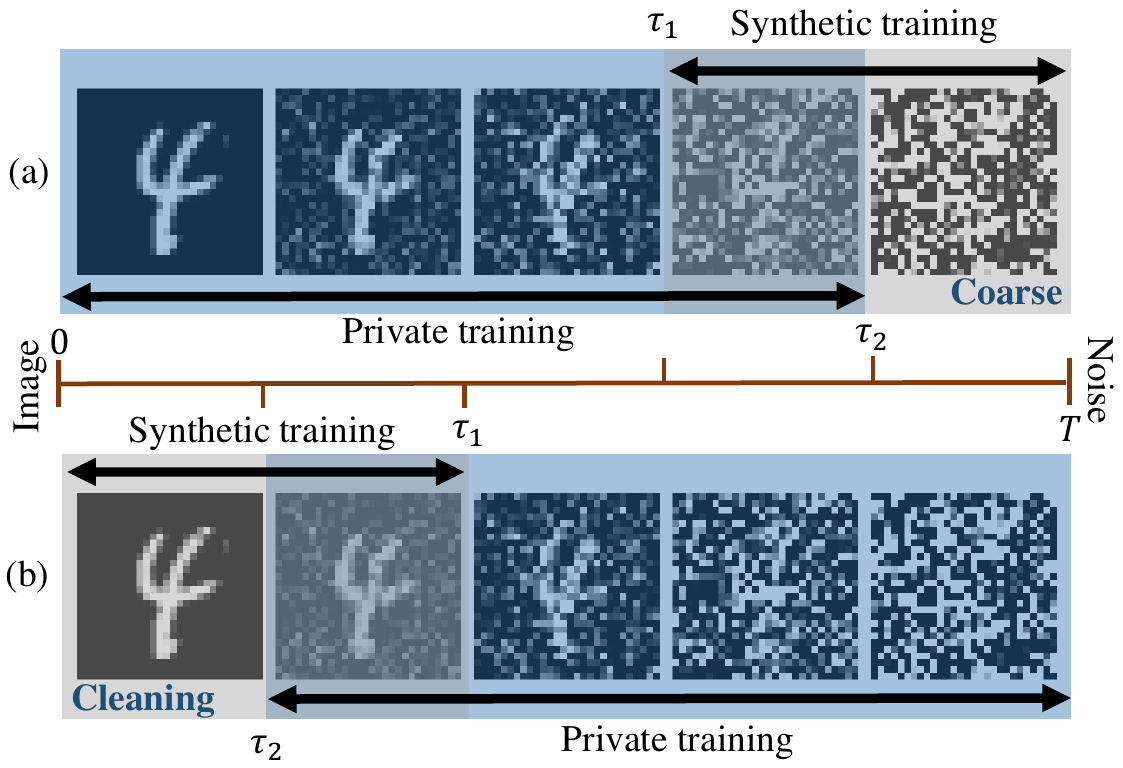}
    \caption{Illustration of (a) DP-SynGen Coarse and (b) DP-SynGen Cleaning, with diffusion process from 0 (Image) to T (Noise). The gray range indicates training with synthetic data, while the blue range indicates training with private data.
    }
    \label{fig:process}
\end{figure}

We describe our method in detail. For DP-SynGen Coarse, which utilizes the synthetic data in the coarse stage, we train the diffusion model with synthetic data when $\ln(\sigma)$ sampled from $\text{TruncatedNormal}(P_{mean}, P^2_{std}|\ln(\sigma) > \tau_1)$. 
After the model converges, the diffusion model is DP-trained with private data when $\ln(\sigma)$ sampled from $\text{TruncatedNormal}(P_{mean}, P^2_{std}|\ln(\sigma) \leq \tau_2)$. 
Generally, we use $\tau_2 \geq \tau_1$ to allow the overlap. 
Otherwise, for DP-SynGen Cleaning, which uses synthetic data in the cleaning stage, we train the diffusion model with the synthetic data when $\ln(\sigma)$ sampled from $\text{TruncatedNormal}(P_{mean}, P^2_{std}|\ln(\sigma) \leq \tau_1)$. 
After the model converges, the diffusion model is DP-trained with private data when $\ln(\sigma)$ sampled from $\text{TruncatedNormal}(P_{mean}, P^2_{std}|\ln(\sigma) > \tau_2)$.

Since it is unnecessary to train where $\ln(\sigma) > \tau_2$ in DP-SynGen Coarse or $\ln(\sigma) \leq \tau_2$ for the DP-SynGen Cleaning, the number of training epochs can be reduced. As \citet{de2022unlocking} mentioned, in DP training, increasing the number of training epochs does not necessarily lead to better performance. 
Our whole framework is illustrated in Algorithm \ref{alg:training} (in  Appendix).

Moreover, not only reducing training epochs but also a good initial point also helps DP training. \citet{tang2024differentially} demonstrated that synthetic data helps find better initial points in classification. 
To verify that training with synthetic data can find the better initial point for the diffusion DP training, we set $\tau_1 = 2$ and $\tau_2 = \infty$ for DP-SynGen Coarse and named this to DP-SynGen FineTune
i.e., we trained the model using the original DPDM model starting from a pre-trained model on synthetic data.

\subsection{Finding optimal thresholds for cleaning and coarse stage}
In this subsection, we find the threshold points $\tau$ based on the SNR and the $\beta_t$-scheduling. Both SNR and $\bar{\alpha}_t = \Pi_{s=1}^t(1-\beta_s)$ reflect the level of noise added during the diffusion process, and since they solely depend on the hyper-parameter, the choice of $\tau_i$ can be calculated without any privacy concerns. 




For DP-SynGen Cleaning, $\beta_t$-scheduling impacts, as we verified in Theorem \ref{thm:cleaning}. Although EDM model does not explicitly defined $\bar{\alpha}_t$, Equation \ref{eq:edm} allows us to compute $\bar{\alpha}_\sigma$, similar to $\bar{\alpha}_t$, based on $\sigma$. Therefore, we choose $\tau_1$ and $\tau_2$ where $\bar{\alpha}_\sigma \approx 1$. The $\bar{\alpha}_\sigma $ varying $\sigma$ is presented in \ref{fig:alpha_bar}.

For DP-SynGen Coarse, we find the coarse stage based on SNR. \citet{choi2022perception} verified that with a small SNR, diffusion learns coarse features. 
Figure \ref{fig:snr_plot} shows the SNR of EDM, where the hyper-parameters follow from \cite{karras2022elucidating}. 
Motivated by \cite{hang2023efficient}, which adjusts training loss weight based on SNR, we inferred that higher SNR is more important for training, but its influence diminishes when it exceeds a certain threshold.
Since the context phase is important in diffusion training, we consider the elbow point as a useful reference for distinguishing between the context and coarse stages. Therefore, we select $\tau_i$ based on elbow point. We hypothesize that the influence of the coarse stage starts around the elbow point and becomes dominant as the curve flattens. We select $\tau_1$ at the elbow point to use synthetic data during the coarse phase and select $\tau_2$ not to use private data at the coarse-dominant stage.

\begin{figure}[!ht]
\centering    
    \subfloat[$\bar{\alpha}_\sigma$ varying $\log(\sigma)$]{\includegraphics[width=0.24\textwidth]{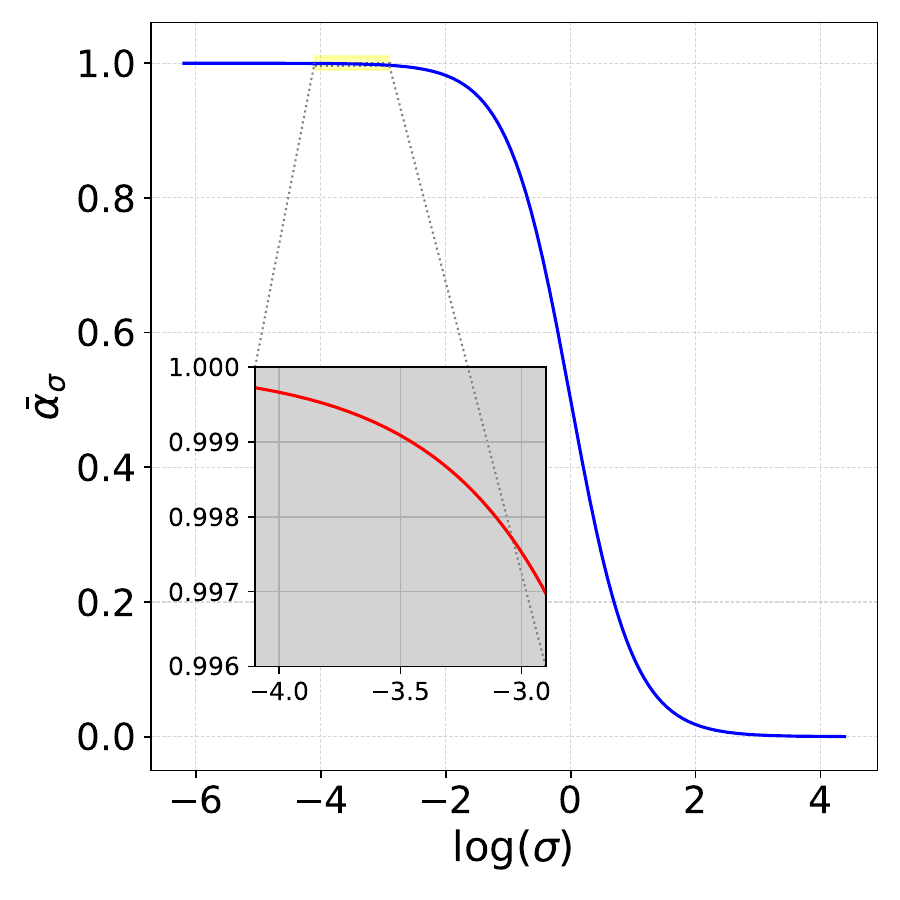} \label{fig:alpha_bar}}
    \subfloat[SNR varying $\sigma$]{\includegraphics[width=0.24\textwidth]{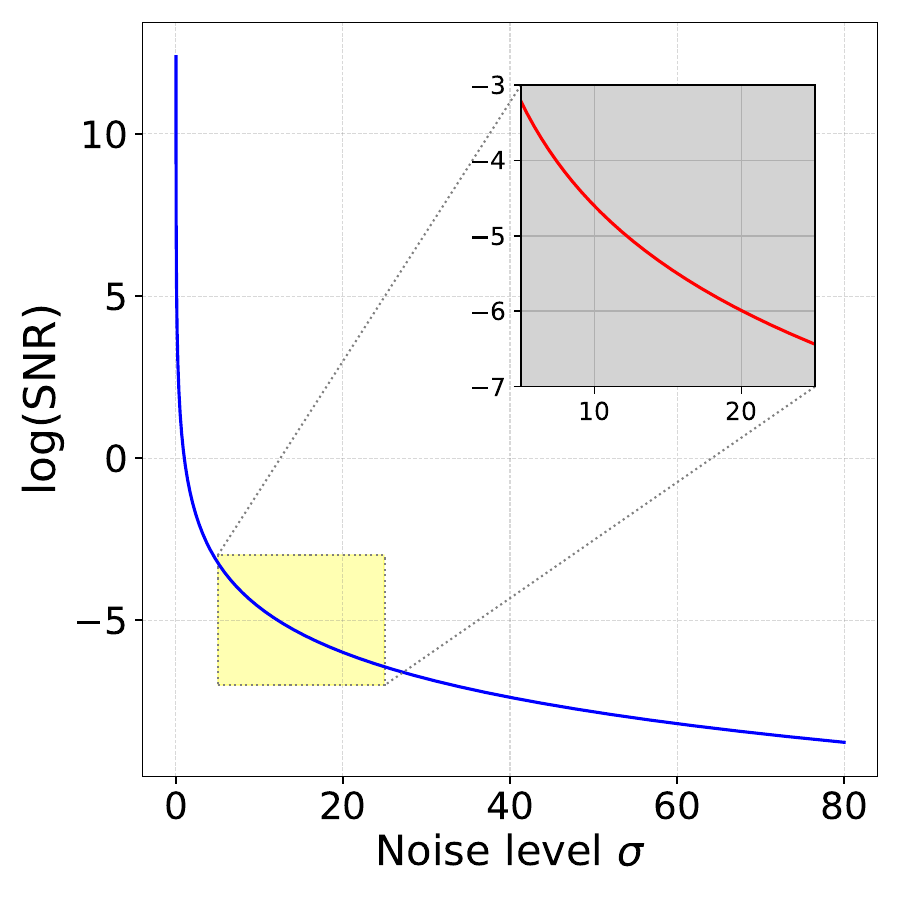} \label{fig:snr_plot}}
    \caption{Visualization of $\bar{\alpha}_\sigma$ and SNR to search the threshold $\tau$}
    \label{fig:snr}
\end{figure}

\section{Experiments}

\begin{table*}[!ht]
\centering
\setlength{\tabcolsep}{8pt} 
\renewcommand{\arraystretch}{0.7}
\begin{tabular}{llcccccccccc}
\toprule
\multirow{3}{*}{Method} & \multirow{3}{*}{DP-$\epsilon$} & \multicolumn{5}{c}{MNIST} & \multicolumn{4}{c}{Fashion MNIST} \\ \cmidrule(lr){3-7} \cmidrule(lr){8-11} 
 &  & \multicolumn{2}{c}{FID} & \multicolumn{3}{c}{Acc (\%)} & FID & \multicolumn{3}{c}{Acc (\%)} \\ \cmidrule(lr){3-4} \cmidrule(lr){5-7} \cmidrule(lr){8-8} \cmidrule(lr){9-11}
 &  & Cond & Uncond & Log Reg & MLP & CNN & Cond & Log Reg & MLP & CNN \\ \midrule
DP-SynGen Coarse & 0.2 & \textbf{141.4} & \textbf{148.4} & 62.9 & 62.5 & 70.0 & 130.5 & \textbf{66.7} & \textbf{66.1} & \textbf{65.8} \\
DP-SynGen Clean & 0.2 & 142.0 & 149.1 & \textbf{64.3} & \textbf{64.2} & \textbf{70.7} & \textbf{124.8} & 63.7 & 64.6 & 64.0 \\
DP-SynGen FineTune & 0.2 & 142.2 & 149.2 & 59.4 & 59.1 & 65.5 & 125.2 & 60.4 & 62.9 & 61.6 \\
DPDM EDM & 0.2 & 142.8 & 163.2 & 59.8 & 58.2 & 67.3 & 125.2 & 60.4 & 62.7 & 61.4 \\ \midrule
DP-SynGen Coarse & 0.5 & 89.6 & \textbf{98.5} & 82.4 & 82.5 & 89.8 & 83.3 & 73.1 & 72.9 & 73.3 \\
DP-SynGen Clean & 0.5 & 91.3 & 100.2 & \textbf{82.7} & \textbf{83.8} & \textbf{90.1} & 83.4 & 73.2 & 73.3 & 74.4 \\
DP-SynGen FineTune & 0.5 & \textbf{84.6} & 102.8 & 81.8 & 83.2 & 89.6 & \textbf{82.9} & \textbf{73.3} & \textbf{74.2} & \textbf{75.5} \\
DPDM EDM & 0.5 & \textbf{84.6} & 102.8 & 80.4 & 82.1 & 90.0 & 83.5 & \textbf{73.3} & \textbf{74.2 }& 74.9 \\ \midrule
DP-SynGen Coarse & 1 & 61.7 & \textbf{64.2} & 85.9 & \textbf{88.1} & \textbf{94.9} & 65.7 & 75.0 & 75.2 & 77.1 \\
DP-SynGen Clean & 1 & 63.0 & 72.1 & 86.1 & \textbf{88.1} & \textbf{94.9} & 67.0 & 75.8 & 76.1 & 77.4 \\
DP-SynGen FineTune & 1 & \textbf{54.8} & \textbf{64.2} & \textbf{86.3} & 88.0 & 94.0 & \textbf{63.2} & \textbf{75.9} & \textbf{76.6} & 78.1 \\
DPDM EDM & 1 & 55.4 & 65.8 & 85.3 & 87.3 & 94.4 & \textbf{63.2} & \textbf{75.9} & 76.2 & \textbf{78.2} \\
DPGANr & 1 & 56.2 & - & - & - & 80.1 & 121.8 & - & - & - \\
DP-HP & 1 & - & - & - & - & 81.5 & 81.5 & 72.3 & - & 72.3 \\ \midrule
DP-SynGen Coarse & 10 & 12.3 & 15.8 & \textbf{89.9} & \textbf{92.4} & \textbf{97.4} & \textbf{21.4} & \textbf{80.5} & 80.6 & \textbf{83.9} \\
DP-SynGen Clean & 10 & 13.7 & 16.0 & \textbf{89.9} & 91.9 & 97.3 & 21.7 & \textbf{80.5} & \textbf{80.9} & 83.7 \\
DP-SynGen FineTune & 10 & 11.9 & 14.0 & 89.6 & 92.0 & 97.1 & 22.1 & 80.2 & \textbf{80.9} & 83.4 \\
DPDM EDM & 10 & \textbf{11.5} & \textbf{13.7} & 89.5 & 92.1 & 97.3 & 22.3 & 79.8 & 80.4 & 82.6 \\
DPGANr & 10 & 13.0 & - & - & - & 95.0 & 56.8 & - & - & 74.8 \\
DP-Sinkhorn & 10 & 48.4 & - & 82.8 & 82.7 & 83.2 & 128.3 & 75.1 & 74.6 & 71.1 \\
G-PATE & 10 & 150.6 & - & - & - & 80.9 & 171.9 & - & - & 69.3 \\
DP-CGAN & 10 & 179.2 & - & 60 & 60 & 63 & 243.8 & 51 & 50 & 46 \\
DataLens & 10 & 173.5 & - & - & - & 80.7 & 167.7 & - & - & 70.6 \\
DP-MERF & 10 & 116.3 & - & 79.4 & 78.3 & 82.1 & 132.6 & 75.5 & 74.5 & 75.4 \\
GS-WGAN & 10 & 61.3 & - & 79 & 79 & 80 & 131.3 & 68 & 65 & 65 \\
\bottomrule
\end{tabular}
\caption{Generation performance measured by FID and CAS, for MNIST and Fashion MNIST. For DPDM, we train and compute ourselves using the EDM model. All other results are taken from the literature.}
\label{tab:main}
\end{table*}
\subsection{Training details}
\textbf{Dataset.}
The experiments were conducted using MNIST and fashion MNIST for the black-and-white image dataset. For colored images, we used the CelebA dataset, which is presented in the Appendix.

For the programmatically generated synthetic images, the type of synthetic data does not affect the diffusion training when it is utilized on the coarse stage, as in Theorem \ref{thm:coarse}. However, in the cleaning stage, the type of synthetic data can affect the quality of the samples, as shown in Figure \ref{fig:toy_cleaning_dead}, Appendix. Therefore, we verify the efficient synthetic data that can be used in the training. 
We utilize `Dead-leaves' images, which have clear boundaries. 

Dead leaves contain various shapes 
with random sizes and colors. 
To increase the contrast of these images, we modified the color selection probabilities. Unlike the original Dead leaves select the color from a uniform distribution over the interval [0,1], we assign a probability of 1/3 to selecting color 0, 1/3 to selecting color 1, and 1/3 to selecting from a uniform distribution over the interval [0,1].

\textbf{Finding Thresholds.}
We set the threshold that makes $\bar{\alpha}_\tau \approx 1$ for DP-SynGen Cleaning. As in the Figure \ref{fig:alpha_bar}, we set $\tau_2 = -3$, where $\bar{\alpha}_{\tau_2} \approx 0.998$ and $\tau_1 = -4$, $\bar{\alpha}_{\tau_1} \approx 0.9997$.
For DP-SynGen Coarse, the elbow point appears near $\ln(\sigma) \approx 2$ and flattens near $\ln(\sigma) \approx 3$, as shown in the Figure \ref{fig:snr_plot}. Therefore, we choose $\tau_1 = 2$, $\tau_2 = 3$.

\textbf{Quality Measure.} We measure the quality of generated data with the Fréchet Inception Distance (FID) \cite{heusel2017gans} score and 
Classification Accuracy Score (CAS) \cite{ravuri2019classification}. 
CAS is the down-stream classifier accuracy which measures the quality of generated data, focusing on the downstream application. 

\textbf{Comparison Methods.}
The experimental results for all comparison methods, except for DPDM \cite{dockhorn2022differentially}, were taken directly from the \citet{dockhorn2022differentially}, which referenced the original sources. Due to the page limitation, details of comparison methods and their references are provided in the Appendix.

\textbf{Hyperparemeters.}
To train DP-SynGen, we randomly labeled the synthetic data. 
The training epochs for DPDM were set to 300, as specified in the original paper, whereas our method was trained for 250 epochs, due to the decreased number of diffusion steps required for DP training. For DP-SynGen FineTune, which does not reduce the diffusion steps, we experimented with both 250 and 300 epochs to determine if the initialization from synthetic data could allow for reduced training while maintaining performance. 
In DPDM and DP-SynGen, although \citet{dockhorn2022differentially} set the multiplicity $k = 32$, we chose $k=16$ since they noted that the performance difference was negligible but computation time was doubled. 

The other hyper-parameters were selected based on the original DPDM paper \cite{dockhorn2022differentially} and its GitHub.
To ensure reproducibility, we will release all experimental code and implementation details at \url{https://github.com/uzn36/DP-SynGen}.
For more details and ablation studies with varying noise multiplicity, refer to the Appendix.

\subsection{Results and analysis}
The results of conditional and unconditional generations are demonstrated in Table \ref{tab:main}. 
The experiments showed that DP-SynGen FineTune generally outperforms DPDM EDM in both FID and CAS, suggesting that synthetic data can provide a better initialization than random initialization. 
However, the performance improvement was modest, showing results comparable to DPDM EDM. On the other hand, DP-SynGen Cleaning and Coarse, with some cases showing performance decreases but others achieving considerable improvements than DPDM EDM.
This indicates the reduction in noise per step has a greater impact on overall performance, which offsets the potential performance degradation caused by the absence of private data in specific stages of training and limited training epochs.

DP-SynGen Cleaning generally performs better in CAS. This is because fine image details have less influence on classifier training compared to high-level features. As shown in the toy example in Figure \ref{fig:toy_cleaning}, replacing the cleaning phase with synthetic data may reduce cleaning performance but still preserve coarse features. 
When the privacy budget is small, the cleaning performance using private data is also limited, reducing the impact of any performance degradation from synthetic data without private learning. Therefore, it can achieve better CAS, specifically in a small privacy budget. 

Moreover, DP-SynGen Coarse achieves better FID scores, particularly at lower privacy budgets. 
As demonstrated in the toy example and the Theorem \ref{thm:coarse}, replacing the private data with the synthetic data in the coarse stage does not change the generated quality, even for nonprivate learning. 
Moreover, as the cleaning stage and the fine image difference have more influence on the FID score and DP-SynGen Coarse reduces the noise per iteration, it can achieve a better FID score.
However, the synthetic data does not have any information about the class condition, it does not give any guidance to the conditional generation. Therefore, the DP-SynGen Coarse improves the FID for unconditional generation more significantly.


\section{Conclusion}
In this paper, we proposed DP-SynGen, a framework that leverages programmatically generated synthetic data for differential private learning. DP-SynGen is categorized into `Cleaning' and `Coarse', based on the diffusion stage where synthetic data is utilized. For each case, we demonstrated its ability to improve DP training, without an additional privacy budget, 
supporting the potential of synthetic data for improving differentially private generative models.
The synthetic data helps mitigate performance loss, particularly when the noise for privacy is large.
We identify a suitable threshold $\tau$, which determines the stages utilizing synthetic data, based on SNR and $\bar{\alpha}_\sigma$, without privacy concerns.



\bibliographystyle{IEEEtranN}
\bibliography{IEEEabrv,ms} 

\newpage
\clearpage
\appendix
\section*{Training Details}\label{sec:appen_training}
In this section, we provide the training details of our experiments. For additional training details, refer to the configs folder in  \url{https://github.com/uzn36/DP-SynGen}.
\subsection{Training details for toy examples}
We trained the diffusion model for the toy example using \url{https://github.com/bot66/MNISTDiffusion}. We set the hyper-parameter as their base setting - specifically, training epochs 100, optimizer Adamw with learning rate 0.001, and cycleLR scheduler. The diffusion time steps were 1000, and the model base dimension was 64. 

For programmatically generated synthetic data, we utilized salt-and-pepper noise with probabilistic of the white image with the mean of MNIST data, i.e. 0.13. 

For the denoising process shown in Figure \ref{fig:toy_context}, we scaled the denoised image for each threshold. This is because the scale of the denoising process is different in each model.

\subsection{Training details for Diffusion models}
Our experiments were based on \url{https://github.com/nv-tlabs/DPDM} and set the hyper-parameters as in their config files. Specifically, we utilized the EDM model with an ema rate of 0.999, attention resolution 7 for the MNIST dataset, 
. For training, we set Adam optimizer with learning rate 3e-4 and weight decay 0. For sampling, we used DDIM \cite{song2020denoising} sampler, with stochastic = False. Moreover, we set $T_{max} = 80.0$ and $T_{min} = 0.002$.
Random seeds were set to 0. 


\subsection{Training details for DP training}
The DPDM code was based on the Opacus \cite{yousefpour2021opacus}. 
For DP hyper-parameters, we set
$\delta = 1e-5$, and varying $\epsilon \in \{0.2, 0.5, 1, 10\}$. The max gradient norm $C = 1.0$ and the maximum physical batch size is 8192. 
The noise multiplicity, which is proposed in the DPDM\cite{dockhorn2022differentially}, we set to be $k=16$, to manage the computation bundle. 

\subsection{Training details for CAS}
To measure CAS, we first train classifiers (Logistic Regression, MLP, and CNN) with the generated data and then test the accuracy with the real test data. 
For each model, we set the batch size of 128 and the learning rate of 5e-4. We train all models for 50 epochs, with Adam optimizer. The model architectures are the same as \url{https://github.com/nv-tlabs/DPDM}.

\subsection{Comparison methods}
For other Comparison methods, we get the results from the literature, as reported in  \cite{dockhorn2022differentially}. Note that PEARL \cite{liew2021pearl} was excluded from our comparisons, as the performance reported in the original literature and \cite{dockhorn2022differentially} differed significantly. The other comparison methods included in the main paper are as follows: 
\begin{multicols}{2}
\begin{itemize}
    \item DPGANr \cite{bie2023private}
    \item DP-HP \cite{vinaroz2022hermite}
    \item DP-Sinkhorn \cite{cao2021don}
    \item G-PATE \cite{long2019scalable}
    \item DP-CGAN \cite{torkzadehmahani2019dp}
    \item DataLens \cite{wang2021datalens}
    \item DP-MERF \cite{harder2021dp}
    \item GS-WGAN \cite{chen2022dpgen}
\end{itemize}
\end{multicols}


\section*{Proof of theorems}
In this section, we restate the theorems and provide their proofs.
\begin{restate}[Restatement of Theorem \ref{thm:coarse}]
For any two different data distributions $X_0$ and $Y_0$, let $X_t$ and $Y_t$ denote their respective states under the forward diffusion process at time $t$, as defined in Equation \ref{eq:diffusion}.
Then, for any $\nu$ and $\gamma$, we can find $N$ such that for any $n\geq N$, following satisfies:
\begin{align}\label{eq:eps}
    P(\|X_n-Y_n\| >\nu) \leq \gamma.
\end{align}
\end{restate}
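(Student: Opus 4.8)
The plan is to use the explicit solution of the forward process together with a coupling by shared noise. For the discrete process of Equation~\ref{eq:diffusion_discrete}, the state at time $n$ of a diffusion started from $X_0$ is $X_n = \sqrt{\bar{\alpha}_n}\,X_0 + \sqrt{1-\bar{\alpha}_n}\,\eta$ with $\bar{\alpha}_n = \Pi_{s=1}^n(1-\beta_s)$ and $\eta\sim\mathcal N(0,I)$; the continuous SDE of Equation~\ref{eq:diffusion} with $f(X_t,t)=-\tfrac12\beta_t X_t$, $g(t)=\sqrt{\beta_t}$ is an Ornstein–Uhlenbeck-type equation whose solution has the analogous form $X_t = e^{-\frac12\int_0^t\beta_s\,ds}X_0 + (\text{Gaussian noise})$. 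The first observation is that when $X_\cdot$ and $Y_\cdot$ are driven by the \emph{same} noise (the same $\eta$, resp. the same Wiener path) the noise terms cancel and
\[
\|X_n-Y_n\| = \sqrt{\bar{\alpha}_n}\,\|X_0-Y_0\|.
\]
Equivalently $X_n\to\eta$ and $Y_n\to\eta$ in probability, i.e. the forward process ``forgets'' its initialization and converges probabilistically to the Gaussian, which is the content of the proof sketch.

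The second ingredient is that $\bar{\alpha}_n\to 0$ as $n\to\infty$: every schedule used in practice (linear, cosine, the EDM $\sigma$-schedule) satisfies $\sum_s\beta_s=\infty$, equivalently $\int_0^\infty\beta_s\,ds=\infty$, so $\Pi_{s=1}^n(1-\beta_s)\to 0$; moreover $\bar{\alpha}_n$ is nonincreasing since $\beta_s\in(0,1)$. With these two facts the proof is a short tightness argument. Fix $\nu>0$ and $\gamma>0$. Because $X_0-Y_0$ is almost surely finite we have $\lim_{R\to\infty}P(\|X_0-Y_0\|>R)=0$, so pick $R$ with $P(\|X_0-Y_0\|>R)\le\gamma$ (under the extra moment hypothesis of Theorem~\ref{thm:cleaning} one could instead take $R$ from Markov's inequality and get an explicit rate). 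Since $\bar{\alpha}_n\downarrow 0$, choose $N$ with $\bar{\alpha}_N\le(\nu/R)^2$; then for all $n\ge N$ one has $\nu/\sqrt{\bar{\alpha}_n}\ge\nu/\sqrt{\bar{\alpha}_N}\ge R$, hence
\[
P(\|X_n-Y_n\|>\nu)=P\!\left(\|X_0-Y_0\|>\nu/\sqrt{\bar{\alpha}_n}\right)\le P(\|X_0-Y_0\|>R)\le\gamma,
\]
which is exactly Equation~\ref{eq:eps}.

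The routine parts are the cancellation identity and the limit $\bar{\alpha}_n\to 0$. The main conceptual point that deserves care is the coupling: the statement is false for arbitrary — in particular independent — realizations of $X_n$ and $Y_n$, since two independent Gaussian draws in $\mathbb R^d$ satisfy $\|X_n-Y_n\|\approx\sqrt{2d}$. It is precisely the shared-noise coupling (the same object that arises when one compares the denoising trajectories of two models at matched noise levels) that produces $\|X_n-Y_n\|=\sqrt{\bar{\alpha}_n}\|X_0-Y_0\|$, so I would state this coupling explicitly at the outset. A secondary technical remark is that the schedule must satisfy $\sum_s\beta_s=\infty$ for $\bar{\alpha}_n\to 0$; this is harmless in practice but should be recorded as a hypothesis.
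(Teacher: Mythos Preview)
Your argument is correct. The paper takes a slightly different (and more informal) route: it invokes that $X_n$ and $Y_n$ both converge in probability to the \emph{same} Gaussian random variable $\mathcal N$, then bounds $P(|X_n-Y_n|>\nu)$ via the triangle inequality $|X_n-Y_n|\le |X_n-\mathcal N|+|Y_n-\mathcal N|$ and a union bound. This is the same coupling idea as yours---convergence to a \emph{common} limit already presupposes the shared-noise coupling you make explicit---but the paper leaves that point unstated. Your approach is more elementary and more transparent: by writing down the cancellation identity $\|X_n-Y_n\|=\sqrt{\bar\alpha_n}\,\|X_0-Y_0\|$ you avoid introducing the limit object altogether and get a bound that depends only on $\bar\alpha_n$ and the tightness of $X_0-Y_0$. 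The paper's triangle-inequality route has the mild advantage of abstracting away from the explicit OU formula (it would work for any forward process that converges in probability to a fixed target under a coupling), whereas yours has the advantage of being fully rigorous as written and of highlighting the two hypotheses that actually matter: the shared-noise coupling and $\bar\alpha_n\to 0$.
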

\begin{proof}
    Let $\mathcal{N}$ be the standard normal distribution. As in \cite{}, $X_n\rightarrow \mathcal{N}$ and $Y_n\rightarrow \mathcal{N}$ in probabilistically. Then, for any $\epsilon$ and $\delta$, there exist $N_1$ such that for $n\geq N_1$, $ P(|X_n-\mathcal{N}|>\frac{\epsilon}{2}) \leq \frac{\delta}{2}$ satisfies. Similarly, $N_2$ exists such that for $n\geq N_2$, $ P(|Y_n-\mathcal{N}|>\frac{\epsilon}{2}) \leq \frac{\delta}{2}$ satisfies. Therefore, for $N = \max(N_1, N_2)$ and for any $n \geq N$,
\begin{align*}
P(|X_n-Y_n| >\epsilon) &= P(|X_n-\mathcal{N} + \mathcal{N} - Y_n| >\epsilon)\\
&\leq P(|X_n-\mathcal{N}| + |Y_n - \mathcal{N}| >\epsilon) \\
&\leq P(|X_n-\mathcal{N}|>\frac{\epsilon}{2}) + P(|Y_n - \mathcal{N}| >\frac{\epsilon}{2}) \\
&\leq \delta.
\end{align*}
\end{proof}

\begin{restate}[Restatement of Theorem \ref{thm:cleaning}]
For any two data distributions $X_0$ and $Y_0$ which satisfies $E(\|X_0\|) < \infty$ and $E(\|Y_0\|) < \infty$, and the
    discrete diffusion process is defined as Equation \ref{eq:diffusion_discrete}. 
    For given diffusion process with $\beta_t$-scheduling and for $t \leq \tau$, $\bar{\alpha}_t = \Pi_{s = 1}^t (1-\beta_s)\approx 1 $. Then, for any $\nu$, we can find $\gamma$ which depends on the difference between two data distributions and $\bar{\alpha}_t$, such that
    \begin{align*}
        P(\|X_t - X_{0} - (Y_t - Y_{0})\| > \nu) <\gamma.
    \end{align*}
\end{restate}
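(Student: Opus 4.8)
The plan is to isolate the only two sources of discrepancy in $X_t - X_0 - (Y_t - Y_0)$: a deterministic rescaling of the clean data and the injected Gaussian noise. Writing the forward process of Equation \ref{eq:diffusion_discrete} for each distribution with its own noise $\eta_X,\eta_Y\sim\mathcal{N}(0,I)$ gives $X_t - X_0 = (\sqrt{\bar{\alpha}_t}-1)X_0 + \sqrt{1-\bar{\alpha}_t}\,\eta_X$ and the analogous identity for $Y$, so that
\begin{equation*}
X_t - X_0 - (Y_t - Y_0) = (\sqrt{\bar{\alpha}_t}-1)(X_0 - Y_0) + \sqrt{1-\bar{\alpha}_t}\,(\eta_X - \eta_Y).
\end{equation*}
By the triangle inequality, $\|X_t - X_0 - (Y_t - Y_0)\| \le (1-\sqrt{\bar{\alpha}_t})\,\|X_0-Y_0\| + \sqrt{1-\bar{\alpha}_t}\,\|\eta_X-\eta_Y\|$, which reduces the problem to controlling the two summands separately.

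First I would apply a union bound, splitting the target event into $\{(1-\sqrt{\bar{\alpha}_t})\|X_0-Y_0\| > \nu/2\}$ and $\{\sqrt{1-\bar{\alpha}_t}\,\|\eta_X-\eta_Y\| > \nu/2\}$. For the first, since $E(\|X_0-Y_0\|) \le E(\|X_0\|) + E(\|Y_0\|) < \infty$ by hypothesis, Markov's inequality yields a bound of the form $\frac{2(1-\sqrt{\bar{\alpha}_t})}{\nu}\,E(\|X_0-Y_0\|)$, which is exactly the advertised dependence "on the difference between the two data distributions and $\bar{\alpha}_t$" and vanishes as $\bar{\alpha}_t\to 1$. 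For the second, note $\eta_X-\eta_Y\sim\mathcal{N}(0,2I_d)$, hence $\|\eta_X-\eta_Y\|^2/2$ is $\chi^2_d$-distributed, and the event becomes $\{\chi^2_d > \nu^2/(8(1-\bar{\alpha}_t))\}$; a Chernoff / sub-exponential tail bound for the $\chi^2$ distribution makes this probability decay to $0$ as $1-\bar{\alpha}_t\to 0$. Summing the two estimates produces an explicit $\gamma=\gamma(\nu,\bar{\alpha}_t,E(\|X_0-Y_0\|),d)$ with $\gamma\to 0$ as $\bar{\alpha}_t\to 1$, so in the cleaning regime $t\le\tau$ (where $\bar{\alpha}_t\approx 1$) the bound $\gamma$ can be made as small as desired, which is the claim.

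The routine content is the Markov and Chernoff estimates; the only point requiring care is the bookkeeping of constants so the final $\gamma$ is genuinely a function of just the distributional gap, $\bar{\alpha}_t$ (and the dimension), and the handling of the coupling of $\eta_X,\eta_Y$ — if the two forward processes are taken to share the same noise realization, the Gaussian term drops out and only the Markov bound survives, an even simpler special case. I expect the main obstacle to be purely expository: phrasing the $\chi^2$ tail bound so that its $\sqrt{1-\bar{\alpha}_t}$ scaling is transparent rather than buried inside dimension-dependent constants.
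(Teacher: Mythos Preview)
Your proposal is essentially identical to the paper's own proof: the same decomposition $(\sqrt{\bar{\alpha}_t}-1)(X_0-Y_0)+\sqrt{1-\bar{\alpha}_t}(\eta_X-\eta_Y)$, the same union bound at $\nu/2$, Markov's inequality on the data term via $E\|X_0-Y_0\|\le E\|X_0\|+E\|Y_0\|<\infty$, and a Chernoff tail bound on the $\chi^2_d$ variable $\|\eta_X-\eta_Y\|^2/2$ for the noise term, summed to produce the explicit $\gamma$. Your additional remark about the shared-noise coupling is a nice aside but not needed for the paper's argument.
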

\begin{proof}
    Diffusion models follow the equation 
    $X_t = \sqrt{\bar{\alpha}_t}{X_{0}} + \sqrt{1-\bar{\alpha}_t}Z_1$ and $Y_t = \sqrt{\bar{\alpha}_t}{Y_{0}} + \sqrt{1-\bar{\alpha}_t}Z_2$, where $Z_1, Z_2$ follows  standard normal distributions, i.e. $Z_1, Z_2 \sim \mathcal{N}$ and $\bar{\alpha}_t\leq 1$ for every $t$. 
    Then, 
    \begin{align*}
        \|X_t - X_{0} &- (Y_t - Y_{0})\| \\&= \|(\sqrt{\bar{\alpha}_t} - 1)(X_{0}- Y_{0}) +  \sqrt{1-\bar{\alpha}_t}(Z_1-Z_2)\|\\
        &\leq |1-\sqrt{\bar{\alpha}_t} |\|(X_{0}- Y_{0})\| + \sqrt{1-\bar{\alpha}_t}\|(Z_1-Z_2)\|.
    \end{align*}
    Therefore, 
    \begin{align*}
        P(\|&X_t - X_{0} - (Y_t - Y_{0})\| > \nu) \\&\leq P((1-\sqrt{\bar{\alpha}_t})\|(X_{0}- Y_{0})\| + \sqrt{1-\bar{\alpha}_t}\|(Z_1-Z_2)\| > \nu)
        \\&\leq P((1-\sqrt{\bar{\alpha}_t})\|(X_{0}- Y_{0})\|> \frac{\nu}{2}) \\&\quad\quad\quad\quad\quad\quad+ P(\sqrt{1-\bar{\alpha}_t}\|(Z_1-Z_2)\| > \frac{\nu}{2})
    \end{align*}
    Now, we will find each upper bound of $P(\|(X_{0}- Y_{0})\|> \frac{\nu}{2(\sqrt{\bar{\alpha}_t} - 1)})$ and $P(\|(Z_1-Z_2)\| > \frac{\nu}{2\sqrt{1-\bar{\alpha}_t}})$.

    Firstly, let us find the upper bound of $P(\|(Z_1-Z_2)\| > \frac{\nu}{2\sqrt{1-\bar{\alpha}_t}})$.
    Since $(Z_1-Z_2) \sim \mathcal{N}(0, 2I)$, $\|Z_1-Z_2\|^2\sim 2 \chi_d^2$, where $\chi_d^2$ follows a chi-squared distribution with degree $d$ (dimension of $X_0$ and $Y_0$). 
    \begin{align*}
        p(\|Z_1-Z_2\| > \frac{\nu}{2\sqrt{1-\bar{\alpha}_t}}) &= p(\|Z_1-Z_2\|^2 > \frac{\nu^2}{4(1-\bar{\alpha}_t)})
        \\&= p(Z_d > \frac{\nu^2}{8(1-\bar{\alpha}_t)}),
    \end{align*}
    where $Z_d = \|Z_1-Z_2\|^2 / 2 \sim \chi^2_d$. Then  as $\bar{\alpha}_t \approx 1$, applying the Chernoff's bound for chi-squared distribution for large $\frac{\nu}{2\sqrt{1-\bar{\alpha}_t}})$,
    \begin{align*}
        p(Z_d > \frac{\nu^2}{8(1-\bar{\alpha}_t)})\leq \exp(-\frac{\frac{\nu^2}{8(1-\bar{\alpha}_t)}-d}{2} -\frac{d}{2}\ln(\frac{\frac{\nu^2}{8(1-\bar{\alpha}_t)}}{d}))\\
        = \exp(-\frac{\nu^2}{16(1-\bar{\alpha}_t)}+\frac{d}{2} -\frac{d}{2}\ln({\frac{\nu^2}{8d(1-\bar{\alpha}_t)}}))
    \end{align*}

    Next, given condition $E[\|X_0 - Y_0\|] < E[\|X_0\|] + E[\|Y_0\|] < \infty$, we can get the upper bound of $P(\|(X_{0}- Y_{0})\|> \frac{\nu}{2(1-\sqrt{\bar{\alpha}_t})})$ by Markov's inequality:
    \begin{align*}
        P(\|X_{0}- Y_{0}\|> \frac{\nu}{2(1-\sqrt{\bar{\alpha}_t})}) < E(\|X_{0}- Y_{0}\|)\cdot \frac{2(1-\sqrt{\bar{\alpha}_t})}{\nu}
    \end{align*}
    Therefore, 
    \begin{align*}
        P(\|&X_t - X_{0} - (Y_t - Y_{0})\| > \nu) \\& \leq \gamma :=  \frac{2(1-\sqrt{\bar{\alpha}_t})}{\nu} \cdot  E(\|X_{0}- Y_{0}\|)
        \\& \quad\quad\quad
        + \exp(-\frac{\nu^2}{16(1-\bar{\alpha}_t)}+\frac{d}{2} -\frac{d}{2}\ln({\frac{\nu^2}{8d(1-\bar{\alpha}_t)}}))
        \\&
    \end{align*}
    
    Note that if $\bar{\alpha}_t \approx 1$, then $\bar{\alpha}_t \leq \sqrt{\bar{\alpha}_t}\leq1$ and $\bar{\alpha}_t \approx1$ so $(1-{\bar{\alpha}_t})\approx 0$ and $(1-\sqrt{\bar{\alpha}_t}) \approx 0$. Therefore, the condition $\bar{\alpha}_t \approx 1$ makes $\gamma$ is small.
\end{proof}

\section*{Algorithm}
In this section, we provide the training algorithm for DP-SynGen. Based on the stage where synthetic data is used, DP-SynGen is categorized into two types: DP-SynGen Coarse and DP-SynGen Cleaning. Additionally, DP-SynGen FineTune is a special case of DP-SynGen Coarse, where $\tau_2=\infty$. 
The selection of the stage where synthetic data is used is determined by the hyper-parameter \textit{syn\_stage}.
To train the DP-SynGen Coarse (Cleaning), set \textit{syn\_stage} = `coarse' (`cleaning').
The total algorithm is in Algorithm \ref{alg:training}.
\begin{algorithm}[t]\DontPrintSemicolon
\SetAlgoLined
\SetNoFillComment
    \caption{DP-SynGen training}
    \label{alg:training}
    \KwIn {Synthetic data distribution $q_{syn}$, private data distribution $q_{priv}$, EDM hyper-parameters
    $P_{mean}$, $P_{std}$, thresholds $\tau_1, \tau_2$, the choice that which stage synthetic data used \textit{syn\_stage}, training epoch for private learning $N$, initial model $\theta$}
    \KwOut {Trained model parameters $\theta$}
    /* Non-private training with synthetic data*/\\
    \While {converge}{
    $X_0 \sim q_{syn}\\$
    \If{\textit{syn\_stage} = coarse}
    {$\ln(\sigma) \sim \text{TruncatedNormal}(P_{mean}, P^2_{std}|\ln(\sigma) > \tau_1)$}
    \Else
    {$\ln(\sigma) \sim \text{TruncatedNormal}(P_{mean}, P^2_{std}|\ln(\sigma) \leq \tau_1)$}
    Perform Non-private EDM training
    }
    /* DP training with private data*/\\
    \For{epoch = 1 \textbf{to} $N$}{
    $X_0 \sim q_{priv}$ \\
    \If{\textit{syn\_stage} = coarse}{
        $\ln(\sigma) \sim \text{TruncatedNormal}(P_{mean}, P^2_{std} | \ln(\sigma) \leq \tau_2)$ \;
    }
    \Else{
        $\ln(\sigma) \sim \text{TruncatedNormal}(P_{mean}, P^2_{std} | \ln(\sigma) > \tau_2)$ \;
    }
    Perform DPDM training\;
}
\end{algorithm}

\section*{Additional Experiments}
In this section, we provide additional experiments by varying multiplicity and applying our method to other datasets.
\subsection{Additional experiments for toy data}
We provide the generated image for the toy example, varying the synthetic data and $\tau$. The experimental details are the same as the Figure \ref{fig:toy_cleaning}.

First, to validate the effect of the synthetic data with Dead leaves \cite{jeulin1997dead}, in Figure \ref{fig:toy_cleaning_dead}. Although both Figure \ref{fig:toy_cleaning} and Figure \ref{fig:toy_cleaning_dead} used the same threshold $\tau = 250$, the generated quality differed, with denoising using Dead leaved-trained model generated clearer images. Based on this finding, we select Dead leaves for our main experiments.
\begin{figure}[ht]
\centering    
    \subfloat[Train with private data]{\includegraphics[width=0.235\textwidth,  trim=0 88 0 0, clip]{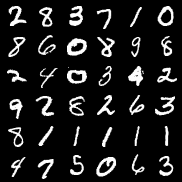}}
    \vspace{0.001cm}
    \subfloat[Train with synthetic data]{\includegraphics[width=0.235\textwidth,  trim=0 88 0 0, clip]{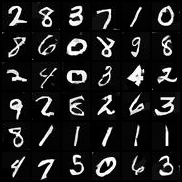} }
    \caption{Samples from (a) diffusion model trained with private data for total diffusion process and (b) diffusion model trained with Dead leaves data for cleaning stage ($t \leq 250$) and private data with other stages ($t > 250$).}
    \label{fig:toy_cleaning_dead}
\end{figure}

Next, for a smaller threshold $\tau = 100$ verify the effect of selecting $\tau$. Figure \ref{fig:toy_cleaning_100} used $\tau = 100$, with the model trained with salt-and-pepper noise image. Compared to Figure \ref{fig:toy_cleaning}, this indicates clearer images, even if they trained on the same synthetic data.
\begin{figure}[ht]
\centering    
    \subfloat[Train with private data]{\includegraphics[width=0.235\textwidth,  trim=0 88 0 0, clip]{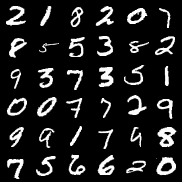}}
    \vspace{0.001cm}
    \subfloat[Train with synthetic data]{\includegraphics[width=0.235\textwidth,  trim=0 88 0 0, clip]{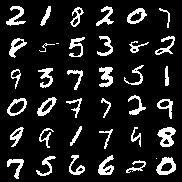} }
    \caption{Samples from (a) diffusion model trained with private data for total diffusion process and (b) diffusion model trained with Salt-and-pepper data for cleaning stage ($t \leq 100$) and private data with other stages ($t > 100$).}
    \label{fig:toy_cleaning_100}
\end{figure}

\subsection{Varying multiplicity}
Noise multiplicity \cite{dockhorn2022differentially} reduces the impact of noise on individual gradients, enabling stable training. To verify this effect, we vary the noise multiplicity. Since noise multiplicity was first introduced in \citet{dockhorn2022differentially}, we are comparing the experimental results of DPDM with our method. 

\begin{table*}[!ht]
\renewcommand{\arraystretch}{1.2} 
\begin{tabular}{lrrrrrrrrrrrrr}
\toprule
\multirow{3}{*}{Method} & \multicolumn{1}{l}{\multirow{3}{*}{DP-$\epsilon$}} & \multicolumn{12}{c}{Multiplicity}                                                                                                                                                                                                                                                                                                     \\ \cline{3-14} 
                        & \multicolumn{1}{l}{}                               & \multicolumn{3}{c}{1}                                                           & \multicolumn{3}{c}{4}                                                           & \multicolumn{3}{c}{8}                                                           & \multicolumn{3}{c}{16}                                                          \\ \cmidrule(lr){3-5} \cmidrule(lr){6-8} \cmidrule(lr){9-11} \cmidrule(lr){12-14}
                        & \multicolumn{1}{l}{}                               & \multicolumn{1}{l}{Log Reg} & \multicolumn{1}{l}{MLP} & \multicolumn{1}{l}{CNN} & \multicolumn{1}{l}{Log Reg} & \multicolumn{1}{l}{MLP} & \multicolumn{1}{l}{CNN} & \multicolumn{1}{l}{Log Reg} & \multicolumn{1}{l}{MLP} & \multicolumn{1}{l}{CNN} & \multicolumn{1}{l}{Log Reg} & \multicolumn{1}{l}{MLP} & \multicolumn{1}{l}{CNN} \\ \hline
Ours Coarse             & 0.2                                                & 22.59                       & 25.15                   & {30.40}          & {51.58}              & 48.77                   & 56.86                   & 53.66                       & 53.08                   & 61.73                   & 62.92                       & 62.53                   & 70.02                   \\
Ours Cleaning           & 0.2                                                & 22.67                       & 25.02                   & {31.58}          & \textbf{52.79}              & \textbf{51.94}          & \textbf{57.43}          & 54.79                       & \textbf{55.56}          & 62.92                   & \textbf{64.27}              & \textbf{64.18}          & \textbf{70.68}          \\
Ours FineTune           & 0.2                                                & \textbf{25.37}              & \textbf{27.42}          & \textbf{31.93}          & {44.87}              & 44.66                   & 55.84                   & \textbf{55.00}              & 54.55                   & 63.12                   & 59.41                       & 59.12                   & 65.48                   \\
DPDM EDM                   & 0.2                                                & {22.45}              & {21.52}          & 26.03                   & 47.48                       & 46.87                   & 55.27                   & 53.29                       & 52.80                   & \textbf{63.56}          & 59.84                       & 58.19                   & 67.29                   \\ \hline
Ours Coarse             & 0.5                                                & {35.85}              & 39.10                   & {52.76}          & {70.63}              & \textbf{74.19}          & 81.82                   & 75.33                       & 76.71                   & 83.32                   & 82.39                       & 82.54                   & 89.82                   \\
Ours Cleaning           & 0.5                                                & \textbf{36.68}              & 36.78                   & \textbf{55.90}          & 71.45                       & 73.82                   & \textbf{81.99}          & 76.11                       & 77.02                   & 83.30                   & \textbf{82.66}              & \textbf{83.75}          & \textbf{90.13}          \\
Ours FineTune           & 0.5                                                & {35.58}              & \textbf{40.50}          & {50.68}          & \textbf{71.51}              & 71.89                   & 80.68                   & \textbf{78.29}              & \textbf{80.49}          & \textbf{88.46}          & 81.80                       & 83.17                   & 89.55                   \\
DPDM EDM                  & 0.5                                                & 35.39                       & {39.86}          & {49.83}          & {70.22}              & 72.13                   & 80.35                   & 77.13                       & 80.00                   & 87.02                   & 80.42                       & 82.14                   & 90.00                   \\ \hline
Ours Coarse             & 1                                                  & {50.13}              & {56.58}          & 67.99                   & {78.90}              & 81.43                   & 88.48                   & 82.73                       & 85.03                   & 91.18                   & 85.94                       & \textbf{88.11}          & 94.89                   \\
Ours Cleaning           & 1                                                  & {51.81}              & 56.21                   & 69.36                   & 79.64                       & 82.20                   & 89.20                   & 83.04                       & 85.65                   & 91.47                   & 86.12                       & 88.08                   & \textbf{94.92}          \\
Ours FineTune           & 1                                                  & \textbf{58.87}              & \textbf{59.51}          & \textbf{70.54}          & \textbf{81.31}              & \textbf{83.34}          & \textbf{90.53}          & 83.20                       & 86.07                   & 92.69                   & \textbf{86.32}              & 87.95                   & 93.95                   \\
DPDM EDM                   & 1                                                  & 53.51                       & {58.93}          & {68.07}          & {80.89}              & 82.23                   & 90.26                   & \textbf{84.10}              & \textbf{86.87}          & \textbf{93.55}          & 85.30                       & 87.30                   & 94.41                   \\ \hline
Ours Coarse             & 10                                                 & \textbf{87.05}              & 90.59                   & \textbf{96.42}          & {88.48}              & 91.75                   & 96.76                   & \textbf{89.76}              & 92.59                   & 97.24                   & \textbf{89.91}              & \textbf{92.41}          & \textbf{97.40}          \\
Ours Cleaning           & 10                                                 & 86.50                       & 90.40                   & {96.29}          & \textbf{89.10}              & 91.75                   & 96.89                   & 89.25                       & 91.82                   & 97.04                   & 89.86                       & 91.90                   & 97.33                   \\
Ours FineTune           & 10                                                 & {86.21}              & \textbf{90.96}          & {96.28}          & 88.72                       & 91.36                   & \textbf{97.01}          & 89.18                       & 92.48                   & \textbf{97.51}          & 89.56                       & 91.98                   & 97.11                   \\
DPDM EDM                   & 10                                                 & 85.86                       & {90.31}          & {95.76}          & {88.48}              & \textbf{91.77}          & 96.94                   & 89.51                       & \textbf{92.84}          & 97.13                   & 89.48                       & 92.10                   & 97.27                \\
\bottomrule
\end{tabular}
\caption{Classifier accuracy score on the MNIST dataset, varying multiplicity and DP-$\epsilon$ values (best scores are highlighted in \textbf{bold}). }\label{tab:mult_cas}
\end{table*}

Table \ref{tab:mult_cas} demonstrates the CAS results for three models with varying multiplicities. Compared to DPDM, the enhancement of DP-SynGen was more significant when the multiplicity was smaller. This is because the impact of DP noise increases as the multiplicity decreases.

\begin{table}[!t]
\renewcommand{\arraystretch}{1.2} 
\begin{tabular}{lrrrrr}
\toprule
\multicolumn{1}{c}{\multirow{2}{*}{Method}} & \multicolumn{1}{c}{\multirow{2}{*}{DP-$\epsilon$}} & \multicolumn{4}{c}{Multiplicity}                                                               \\ \cline{3-6} 
\multicolumn{1}{c}{}                        & \multicolumn{1}{c}{}                               & \multicolumn{1}{c}{1} & \multicolumn{1}{c}{4} & \multicolumn{1}{c}{8} & \multicolumn{1}{c}{16} \\ \hline
DP-SynGen Coarse                                 & 0.2                                                & 180.0                 & 164.1                 & \textbf{143.4}        & \textbf{141.4}         \\
DP-SynGen Cleaning                               & 0.2                                                & 179.9                 & 164.1                 & {144.0}        &  {142.0}         \\
DP-SynGen FineTune                               & 0.2                                                & \textbf{177.2}        &\textbf{158.4}                & {154.6}        & {142.2}         \\
DPDM EDM                                        & 0.2                                                & {180.7}        & {159.3}        & 150.8                 & 142.8                  \\ \hline
DP-SynGen Coarse                                 & 0.5                                                & \textbf{133.7}        & 114.4                 & {101.5}        & {89.6}          \\
DP-SynGen Cleaning                               & 0.5                                                & {134.1}        & 115.3                 & 102.7                 & 91.3                   \\
DP-SynGen FineTune                               & 0.5                                                & {143.8}        & \textbf{112.3}        & \textbf{101.0}        & \textbf{84.6}          \\
DPDM EDM                                        & 0.5                                                & 144.1                 & \textbf{112.3}        & \textbf{101.0}        & \textbf{84.6}          \\ \hline
DP-SynGen Coarse                                 & 1                                                  & \textbf{104.4}        & {77.6}         & 69.1                  & 61.7                   \\
DP-SynGen Cleaning                               & 1                                                  & {105.5}        & 78.5                  & 71.0                  & 63.0                   \\
DP-SynGen FineTune                               & 1                                                  & {112.7}        & 77.5                  & \textbf{62.4}         & \textbf{54.8}          \\
DPDM EDM                                        & 1                                                  & 113.4                 & \textbf{77.4}         & {64.5}         & {55.4}          \\ \hline
DP-SynGen Coarse                                 & 10                                                 & 25.1                  & 16.4                  & \textbf{12.6}         & {12.3}          \\
DP-SynGen Cleaning                               & 10                                                 & 28.7                  & 18.0                  & {14.7}         & 13.7                   \\
DP-SynGen FineTune                               & 10                                                 & {23.8}         & \textbf{15.4}         & {13.5}         & 11.9                   \\
DPDM EDM                                        & 10                                                 & \textbf{23.7}         & {15.5}         & 13.4                  & \textbf{11.5}  \\
\bottomrule
\end{tabular}
\caption{Conditional generation results on the MNIST dataset. FID ($\downarrow$) measured with varying multiplicity and DP-$\epsilon$ values.}\label{tab:mult_fid_cond}
\end{table}

\begin{table}[!t]
\renewcommand{\arraystretch}{1.2} 
\begin{tabular}{lrrrrr}
\toprule
\multicolumn{1}{c}{\multirow{2}{*}{Method}} & \multicolumn{1}{c}{\multirow{2}{*}{DP-$\epsilon$}} & \multicolumn{4}{c}{Multiplicity}                                                               \\ \cline{3-6} 
\multicolumn{1}{c}{}                        & \multicolumn{1}{c}{}                               & \multicolumn{1}{c}{1} & \multicolumn{1}{c}{4} & \multicolumn{1}{c}{8} & \multicolumn{1}{c}{16} \\ \hline
DP-SynGen Coarse                            & 0.2                                                & 183.8                 & 171.6                 & \textbf{152.8}        & \textbf{148.4}         \\
DP-SynGen Cleaning                          & 0.2                                                & 184.0                 & 172.0                 & {153.6}        & {149.1}         \\
DP-SynGen FineTune                          & 0.2                                                & \textbf{180.4}        & \textbf{163.6}        & {158.0}        & {149.2}         \\
DPDM EDM                                        & 0.2                                                & {184.0}        & {171.1}        & 167.6                 & 163.2                  \\ \hline
DP-SynGen Coarse                            & 0.5                                                & \textbf{142.4}        & 120.7                 & {112.1}        & \textbf{98.5}          \\
DP-SynGen Cleaning                          & 0.5                                                & {142.9}        & 121.5                 & 112.8                 & 100.2                  \\
DP-SynGen FineTune                          & 0.5                                                & {153.9}        & \textbf{115.2}        & \textbf{103.7}        & {102.8}         \\
DPDM EDM                                        & 0.5                                                & 155.6                 & {119.6}        & {110.2}        & {102.8}         \\ \hline
DP-SynGen Coarse                            & 1                                                  & {120.9}        & {89.3}         & 75.6                  & \textbf{64.2}          \\
DP-SynGen Cleaning                          & 1                                                  & \textbf{113.9}        & 88.6                  & 82.4                  & 72.1                   \\
DP-SynGen FineTune                          & 1                                                  & {122.1}        & \textbf{82.4}         & \textbf{66.2}         & \textbf{64.2}          \\
DPDM EDM                                        & 1                                                  & 125.3                 & \textbf{82.4}         &{66.3}         & {65.8}          \\ \hline
DP-SynGen Coarse                            & 10                                                 & 33.0                  & 19.0                  & {17.4}         & {15.8}          \\
DP-SynGen Cleaning                          & 10                                                 & 33.4                  & 21.1                  & {17.5}         & 16.0                   \\
DP-SynGen FineTune                          & 10                                                 & \textbf{32.6}         & {19.2}         & {16.3}         & 14.0                   \\
DPDM EDM                                        & 10                                                 & 32.8                  & \textbf{18.9}         & \textbf{15.4}         & \textbf{13.7}         \\
\bottomrule
\end{tabular}
\caption{Unconditional generation results on the MNIST dataset. ($\downarrow$) measured with varying multiplicity and DP-$\epsilon$ values  (best scores are highlighted in \textbf{bold}).}\label{tab:mult_fid_uncond}
\end{table}

The FID results for conditional generation are presented in Table \ref{tab:mult_fid_cond}, and the results for unconditional generation are shown in Table \ref{tab:mult_fid_uncond}.

DP-SynGen tends to achieve greater performance improvements when the privacy budget  $\epsilon$ and noise multiplicity are small. This experimental result suggests that using synthetic becomes more beneficial as the impact of DP noise increases. This is because we trained the model with synthetic data without injecting noise for privacy concerns. As a result, DP-SynGen reduces the injection of DP noise per diffusion step, as it includes phases where DP training is not applied.

\section*{Generated examples}
In this section, we provide the generated examples for each dataset.

\paragraph{MNIST}
The generated example of the MNIST dataset, with $\epsilon \in \{0.2, 0.5, 1, 10\}$. The training details are the same as the above state. 
\begin{figure*}[ht]
\centering    
    \subfloat[MNIST Coarse $\epsilon=0.2$]{\includegraphics[width=0.24\textwidth]{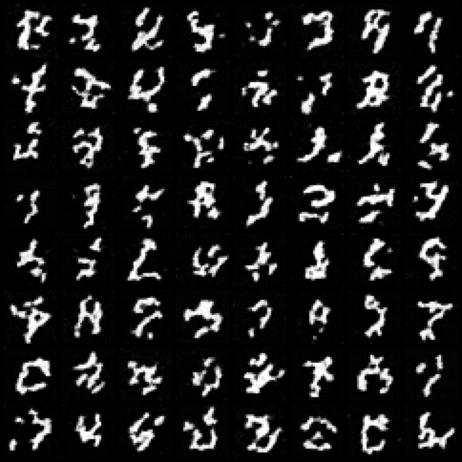}}
    \hspace{0.05cm}
    \subfloat[MNIST Cleaning $\epsilon=0.2$]{\includegraphics[width=0.24\textwidth]{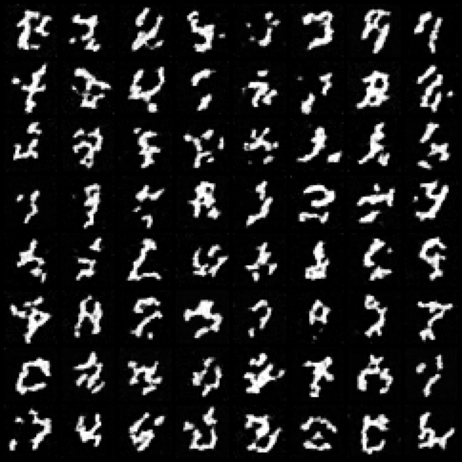}}
    \hspace{0.05cm}
    \subfloat[MNIST Coarse $\epsilon=0.5$]{\includegraphics[width=0.24\textwidth]{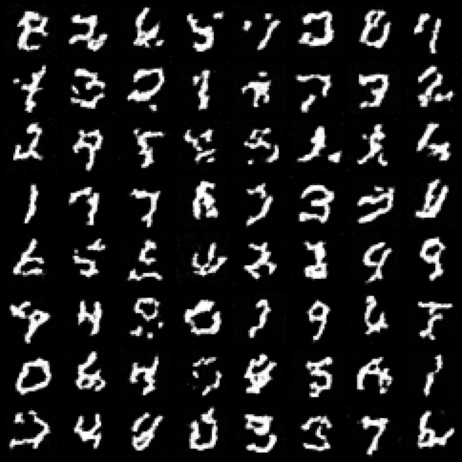}}
    \hspace{0.05cm}
    \subfloat[MNIST Cleaning $\epsilon=0.5$]{\includegraphics[width=0.24\textwidth]{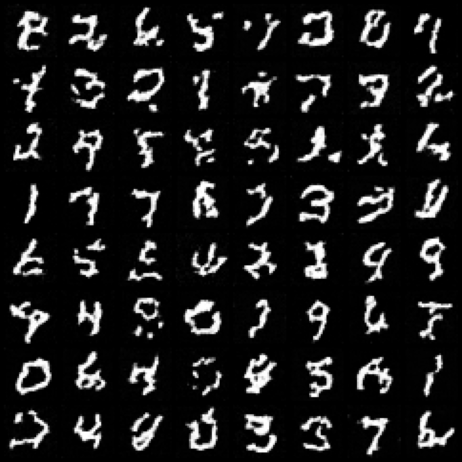}}
    \hspace{0.05cm}
    \subfloat[MNIST Coarse $\epsilon=1$]{\includegraphics[width=0.24\textwidth]{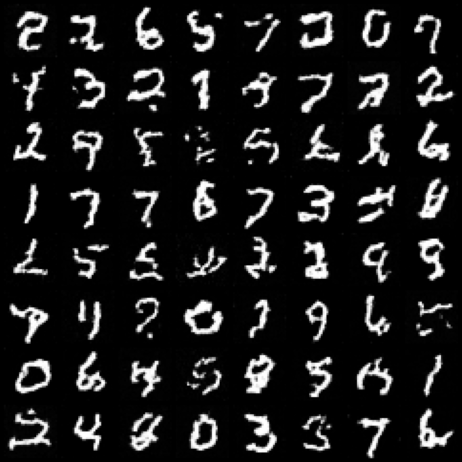}}
    \hspace{0.05cm}
    \subfloat[MNIST Cleaning $\epsilon=1$]{\includegraphics[width=0.24\textwidth]{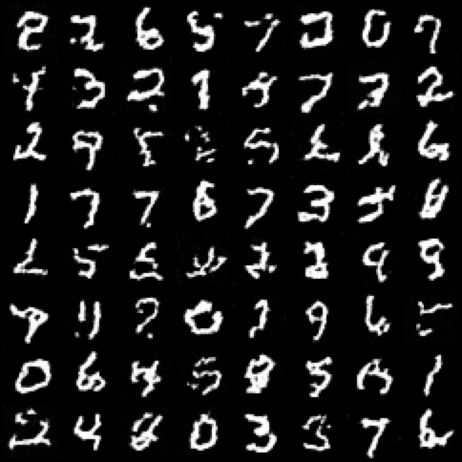}}
    \hspace{0.05cm}
    \subfloat[MNIST Coarse $\epsilon=10$]{\includegraphics[width=0.24\textwidth]{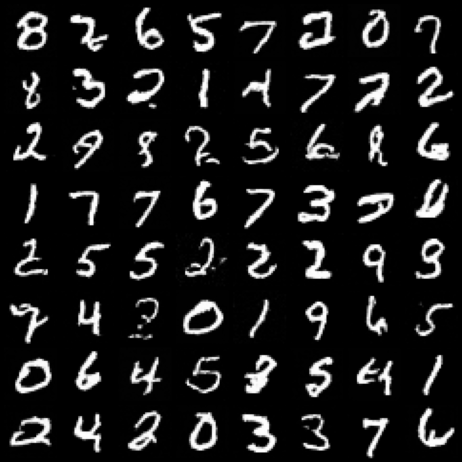}}
    \hspace{0.05cm}
    \subfloat[MNIST Cleaning $\epsilon=10$]{\includegraphics[width=0.24\textwidth]{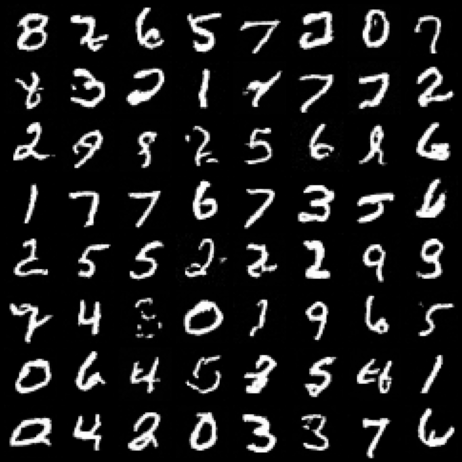}}
    \caption{Generated examples of MNIST dataset.}
    \label{fig:mnist_ours}
\end{figure*}

\paragraph{Fashion MNIST}
The generated example of the Fashion MNIST dataset, with $\epsilon \in \{0.2, 0.5, 1, 10\}$. The training details are the same as the above state. 
\begin{figure*}[ht]
\centering    
    \subfloat[fMNIST Coarse $\epsilon=0.2$]{\includegraphics[width=0.24\textwidth]{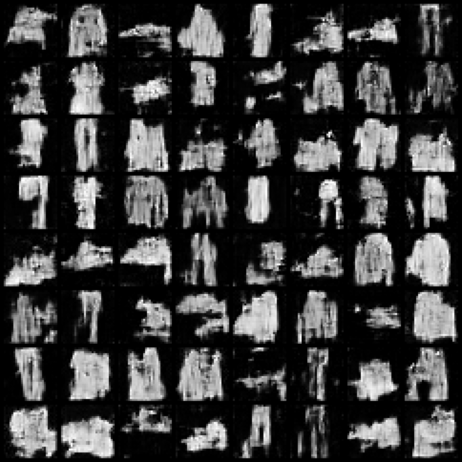}}
    \hspace{0.05cm}
    \subfloat[fMNIST Cleaning $\epsilon=0.2$]{\includegraphics[width=0.24\textwidth]{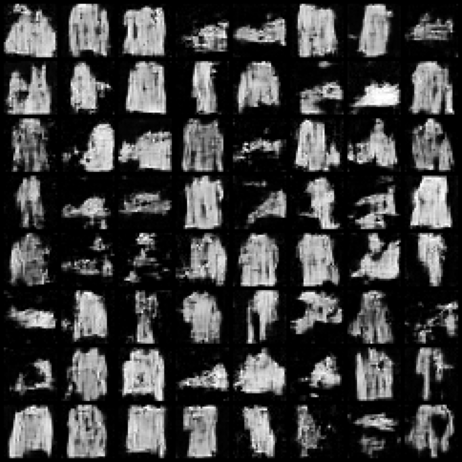}}
    \hspace{0.05cm}
    \subfloat[fMNIST Coarse $\epsilon=0.5$]{\includegraphics[width=0.24\textwidth]{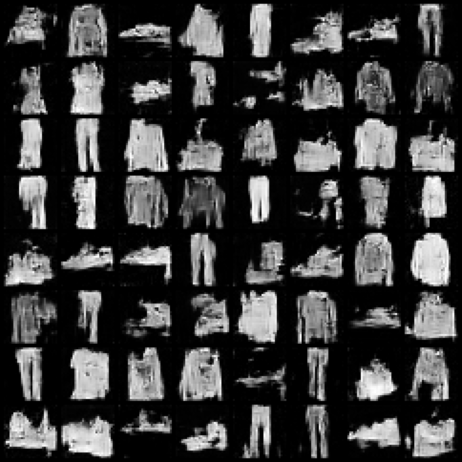}}
    \hspace{0.05cm}
    \subfloat[fMNIST Cleaning $\epsilon=0.5$]{\includegraphics[width=0.24\textwidth]{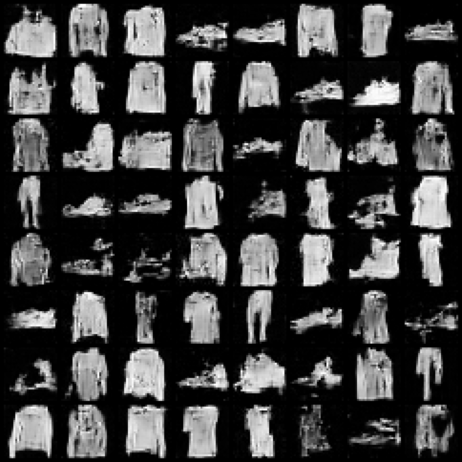}}
    \hspace{0.05cm}
    \subfloat[fMNIST Coarse $\epsilon=1$]{\includegraphics[width=0.24\textwidth]{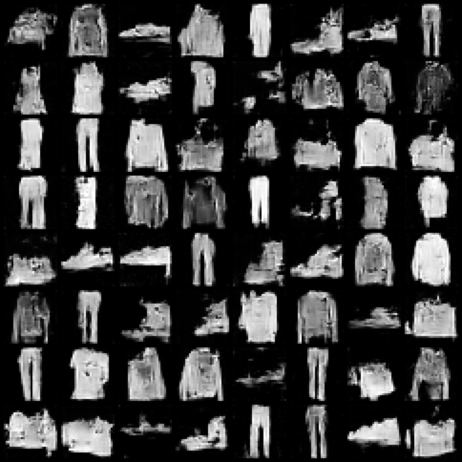}}
    \hspace{0.05cm}
    \subfloat[fMNIST Cleaning $\epsilon=1$]{\includegraphics[width=0.24\textwidth]{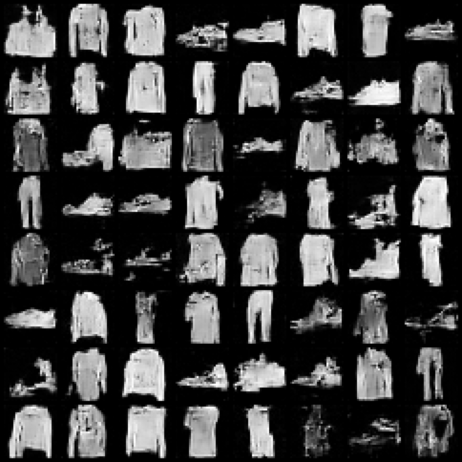}}
    \hspace{0.05cm}
    \subfloat[fMNIST Coarse $\epsilon=10$]{\includegraphics[width=0.24\textwidth]{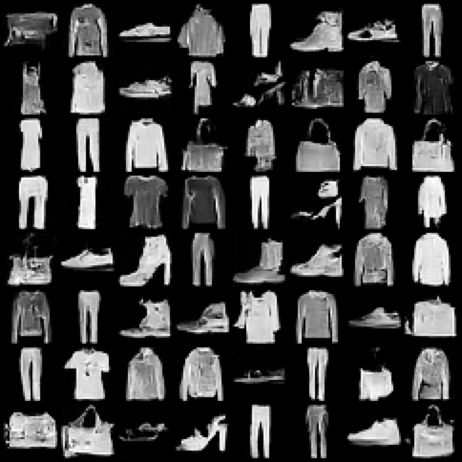}}
    \hspace{0.05cm}
    \subfloat[fMNIST Cleaning $\epsilon=10$]{\includegraphics[width=0.24\textwidth]{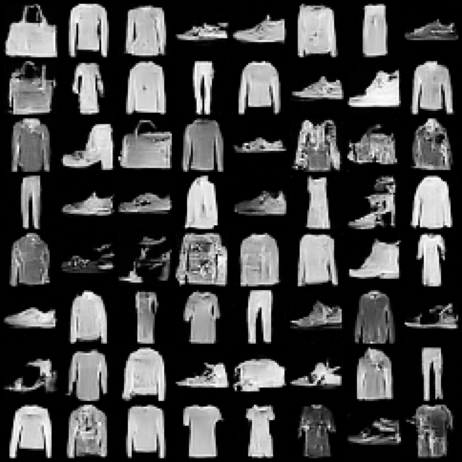}}
    \caption{Generated examples of Fashion MNIST dataset.}
    \label{fig:fmnist_ours}
\end{figure*}

\paragraph{CelebA}
To validate the potential to adapt DP-SynGen in the color image, we conduct experiments for the CelebA dataset.
For CelebA data, we used the Dead-leaves data for programmatically generated synthetic data. Noise multiplicity for DP training is set to be $k=16$, and training epochs were 100. 
\begin{figure}
    \centering
    \includegraphics[width=0.8\linewidth]{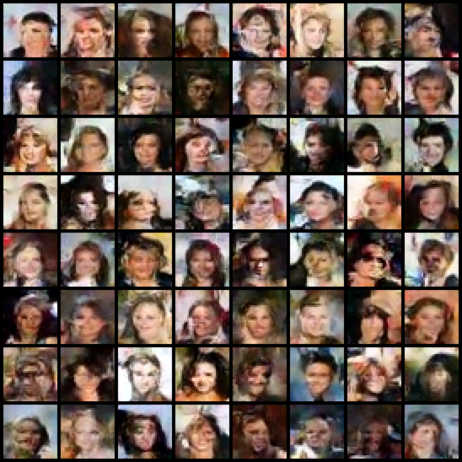}
    \caption{Generated example of CelebA dataset with DP-SynGen Cleaning method, with $\epsilon = 5$}
    \label{fig:celeba_clean}
\end{figure}

\begin{figure}
    \centering
    \includegraphics[width=0.8\linewidth]{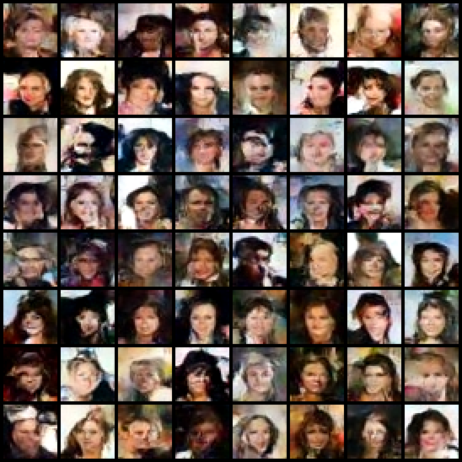}
    \caption{Generated example of CelebA dataset, with DP-SynGen Coarse method, $\epsilon = 5$}
    \label{fig:celeba_coarse}
\end{figure}







\end{document}